\documentclass{article}

\usepackage[preprint]{corl_2026} 

\usepackage{amsmath,amsthm, amssymb,amsfonts,color,nicefrac}
\usepackage{bbm}
\usepackage{graphicx}
\usepackage{algorithm}
\usepackage{algpseudocode}
\usepackage{comment}
\usepackage{caption}
\usepackage{subcaption}

\title{Collaborative Navigation and Exploration with $\beta$-Sparse Gaussian Processes}

\author{%
  Evangelos~Psomiadis $^1$
  \And
  Dipankar~Maity $^{2}$
  \And
  Panagiotis~Tsiotras $^1$ 
  \\
  \\
  $^1$ D. Guggenheim School of Aerospace Engineering, Georgia Tech, Atlanta, GA, USA \\
  $^2$ Department of Electrical and Computer Engineering, UNC Charlotte, Charlotte, NC, USA 
}

\DeclareMathOperator{\E}{\mathbb{E}}

\DeclareMathOperator*{\argmin}{arg\,min}
\renewcommand{\Re}{\mathbb{R}}

\newtheorem{theorem}{Theorem}
\newtheorem{lemma}{Lemma}

\begin{document}
\maketitle

\begingroup
\renewcommand{\thefootnote}{}
\footnotetext{Corr. authors: \texttt{epsomiadis3@gatech.edu, dmaity@charlotte.edu, tsiotras@gatech.edu}}
\endgroup


\begin{abstract} 
Collaborative navigation of heterogeneous robots in unknown environments poses significant challenges due to sensing, communication, and computational limitations. 
In this work, a lead robot navigates toward a target while a mobile sensor robot (e.g., a drone) assists by transmitting information about its locally observed map under bandwidth constraints. 
We propose a framework that enables the sensor to jointly select its transmitted map points and navigation actions online, while also predicting unexplored regions of the environment. 
To this end, we present \emph{$\beta$-Sparse Gaussian Processes}, a robust variational sparse Gaussian Process model for task-aware inducing point selection under cardinality constraints. 
Furthermore, we develop an action-selection strategy that balances task relevance with exploration.
Simulations on Mars and Earth maps show that the framework can reduce path cost by 18\% relative to no communication and decrease transmitted information by 76\% compared to raw-data transmission baselines.
\end{abstract}

\keywords{Multi-Agents, Sparse Gaussian Processes} 


\section{Introduction}
Robotic agents are required to navigate and explore large, unknown environments while communicating efficiently to achieve team-level objectives.
Across such scenarios, three fundamental questions arise: \textit{how is the environment modeled and perceived}, \textit{what information is shared among the agents}, and \textit{how are the agents’ actions determined}?
In this work, we employ Gaussian Processes (GPs) to model the environment and address the latter two questions using this probabilistic model.

GPs are widely used for modeling functions due to their simplicity and ability to explicitly quantify uncertainty~\citep{Rasmussen2006}.
There is extensive literature in the active sensing community where GPs have been used to model unknown spatial fields, such as wildfires~\citep{Du2025}. 
In the context of navigation, they have also been employed to represent occupancy maps~\citep{GhaffariJadidi2018}.
However, their poor scalability to large problems due to kernel matrix inversion remains a major limitation for many practical applications.

In the past, many researchers have addressed the scalability issue of GPs by proposing low-rank approximations~\citep{Williams2001Nystrom}.
The authors in~\citep{Titsias09} proposed Sparse Gaussian Process Regression (SGPR), and \citep{Hensman2013BigData} later extended this framework to Stochastic Variational Gaussian Processes.
In~\citep{Wei2021DLM}, the authors added a KL-divergence-based regularization term dependent on the latent variable to improve performance.
Nevertheless, reducing the computational complexity of GPs remains an open problem~\citep{wenger2024}.

While these methods effectively reduce the computational burden of GPs (in a sense, compressing them), they do not explicitly account for the task-dependent relevance of the compressed representation. 
The work of~\citep{Hensman2015MCMC} enabled more informative approximations by introducing a prior over the inducing inputs. 
The prior can be tailored to a specific task, such as a spatial property for regression or a class-specific characteristic for classification~\citep{higgins2017betavae}.
Building on this, subsequent approaches explored alternative priors~\citep{rossi2021sparsegp}, including methods that infer the number of inducing points~\citep{Uhrenholt2021Probabilistic} or mitigate the problem of local maxima~\citep{meng2021rsgp}.
In our work, we use an independent Bernoulli point process over inducing inputs to capture map spatial properties while satisfying cardinality constraints.

The main focus of this paper is multi-robot operations in bandwidth-constrained environments, where a robot navigates toward a goal while a mobile sensor assists the robot by transmitting environmental information. 
To efficiently select informative observations, we introduce $\beta$-Sparse Gaussian Processes ($\beta$-SGPs), a sparse GP approximation inspired by the Information Bottleneck method~\citep{Tishby1999IB} and $\beta$-Variational Autoencoders~\citep{higgins2017betavae}.
We expect this model to be of interest not only to the robotics community but also to the machine learning community at large, extending beyond regression to other tasks (e.g., classification).
In addition, we develop an informative exploration strategy for the sensor robot inspired by the GP-UCB algorithm~\citep{Srinivas2012, Bogunovic2016TimeVaryingGP}.

The main contributions of this paper are summarized as follows: (1) We introduce $\beta$-SGP, a Sparse Gaussian Process model for task-dependent compression of GPs under cardinality constraints; (2) We develop a framework for collaborative online navigation and exploration using $\beta$-SGP for selecting transmitted points under bandwidth constraints along with a strategy for action selection; (3) We evaluate the proposed approach in real-map simulations and compare it against other alternatives.

\begin{figure}[tb]
    \centering        
    {\includegraphics[width=0.8\linewidth]{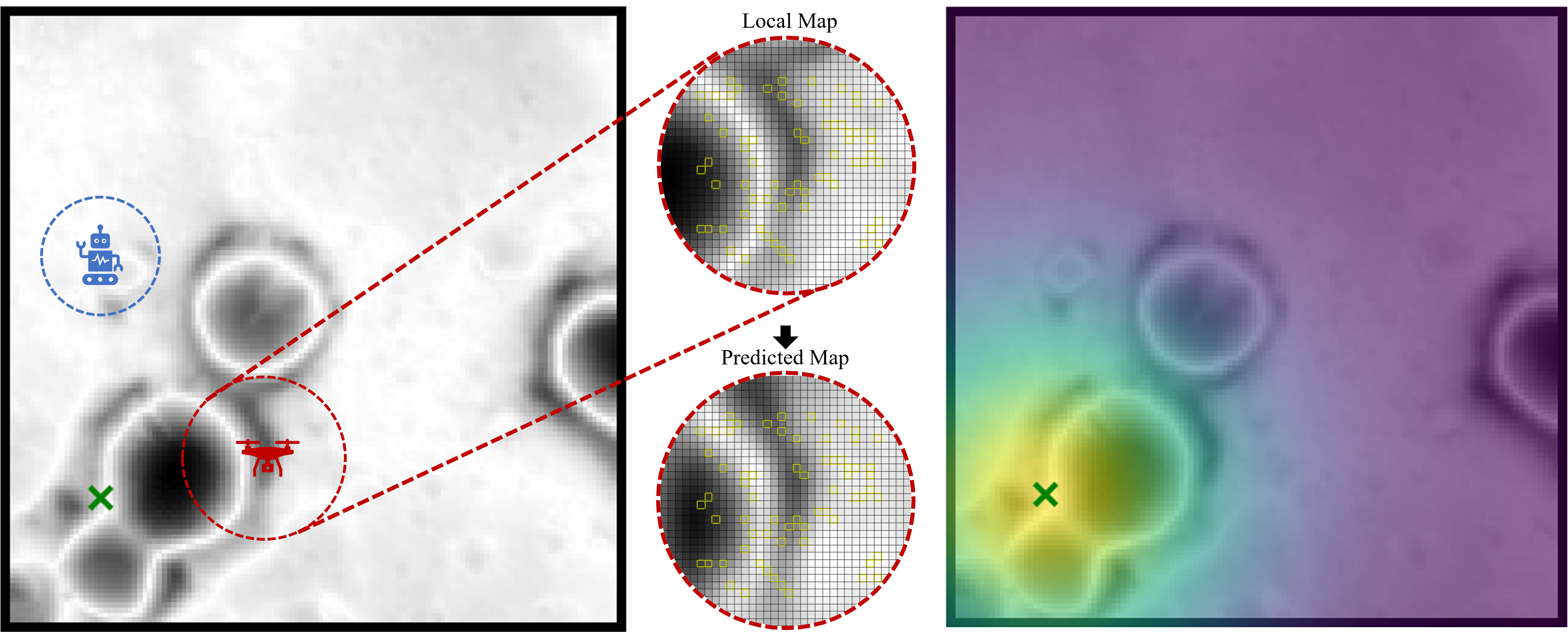}}
    \caption{(left) Mars slope map~\citep{HiRISE2024} with two robots: the Actor (blue) and the Sensor (red) along with the Actor's goal $\times$. The Sensor selects 70 representative points from its local map (yellow squares) and transmits them to the Actor, which reconstructs a predicted map using Gaussian Processes. (right) Path-agnostic Region of Interest (RoI).}
    \label{fig:mars_env}
\end{figure}

\section{Related Work}
Several researchers have studied collaborative navigation problems involving heterogeneous robot teams.
In~\citep{Sasaki2020WhereToMap}, a rover-copter team is considered, where the copter reduces the rover’s odometry uncertainty.
In~\citep{Psomiadis2024DARS, Psomiadis2025}, mobile sensors perform informative communication and exploration to support ground-vehicle navigation, while~\citep{pedram2025, suthar2025} address similar problems using rate-distortion theory and Mixed-Integer Linear Programming, respectively.
Similarly,~\citep{Larsson2025} proposes a framework for time-sequential compression of dynamic probabilistic occupancy grids.
In \cite{Damigos2024}, the authors introduce a communication-aware framework with a control function for 5G-enabled robots for 3D point cloud map merging that regulates the transmission rate.
However, these works do not leverage GPs to predict unexplored map regions, which could further improve mapping.

Along this direction, several works have incorporated GPs into related problems.
The authors in~\citep{Kai-cheih2018} propose an informative path planning framework based on sparse GPs for environmental monitoring.
In~\citep{Jakkala2024MultiRobotIPP}, a similar problem is studied in the multi-robot setting under more realistic robotic constraints.
In~\citep{STLAURENT2023110966}, a coupled approach to path planning and sensor configuration is developed using GPs, where path planning is performed offline. 
More recently,~\citep{Du2025} propose a framework for active sensing and coverage of an unknown spatial field, employing online random feature GPs.
Nevertheless, existing works do not consider task-dependent compression of the GP model.

\section{Preliminaries and Notation}
 \label{sec:preliminaries}

We model the environment as a static 2D grid $\mathcal X \subset \Re^2$ (e.g., occupancy grid \cite{elfes1987, moravec1988}, elevation map, or a discretized cost map, in general), with the total number of cells in the grid denoted by $N$.
An example of a Mars slope map~\citep{HiRISE2024} is shown in Figure~\ref{fig:mars_env}.
We denote by $\mathbf{x}_i \in \mathcal X$ the position of a  $i$-th cell (e.g., cell center) and by $f(\mathbf{x}_i) \in [0,1]$ its corresponding value.
For instance, in a slope map, values of $1$ and $0$ correspond to the highest and lowest slope, respectively, and the robot is assumed to have a predefined threshold in $[0,1]$ above which the terrain is considered unsafe.
The measurement of robot $r$ is given by $y_i^{r} = f(\mathbf{x}_i) + \varepsilon_i^{r}$, where the perception noise is $\varepsilon_i^{r} \sim \mathcal{N}\big(0, (\sigma^{r})^2\big)$.
We use $p(\cdot)$ to denote the probability density induced by the proposed model, while $q(\cdot)$ denotes variational distributions.
Let $v^{r}_{t} \in U$ represent robot $r$'s control action at timestep $t$, chosen from a finite set of control actions $U$.
When the context is clear, we omit the robot superscript.

\section{Problem Formulation}

Consider a pair of mobile robots, an Actor and a Sensor, that move through an unknown environment \( \mathcal X \subset \Re^2\), as described in Section~\ref{sec:preliminaries} and shown in Figure~\ref{fig:mars_env}.
Given the continuous nature of the considered environments, we model the underlying cost map using regression, where the environment is represented by a latent function $f$ sampled from a Gaussian Process (see Appendix~\ref{app:GP_all}),
i.e., \( f(\mathbf{x}) \sim \mathcal{GP}\big(\mu(\mathbf{x}), k_{\boldsymbol{\theta}} (\mathbf{x}, \mathbf{x}')\big) \),  where \( \mathbf{x}, \mathbf{x}' \in \mathcal{X} \), and the kernel $k$ function is parameterized by hyperparameters \( \boldsymbol{\theta} \).
Both agents operate concurrently and, at each timestep \( t \), perceive a noisy local map of the environment. 
The Actor's goal is to reach a known target while minimizing a certain cost by employing an online path-planning algorithm. 
The Sensor aims to assist the Actor by transmitting its observations through a network with varying bandwidth.
We assume that the Sensor is an aerial vehicle, and thus its motion is independent of the cell values, whereas the Actor must account for them (e.g., ground vehicle).
We denote the Actor by $r = A$ and the Sensor by $r = S$.

\subsection{Problem Statement}

In this work, we develop a framework in which the Sensor assists the Actor in reaching its target.
The Sensor selects which observations to transmit using the proposed \( \beta \)-Sparse GP model and determines its next action using an algorithm that balances exploration and exploitation.
The Actor computes its control action \( v_t^{A} \) based on both its own measurements and the data received from the Sensor.

The framework takes as inputs the initial  Actor and Sensor positions, \( \mathbf{x}_0^{A} \) and \( \mathbf{x}_0^{S} \), the Actor’s target \( \mathbf{x}_G^{A} \), and the perception noise \( \varepsilon^{r} \). 
It also incorporates, at each timestep \( t \), the perceived environment of each robot $r$, denoted by \( (\mathbf{X}, \mathbf{y}(\mathbf{X}))_{t}^{r} \), where $\mathbf{X}^r = [\mathbf{x}_1^\top, \dots, {\mathbf{x}_{n^{r}}}^\top]^\top$ are inputs (i.e., map locations) of robot $r$. 
An overview of the proposed framework is shown in Figure~\ref{fig:framework}.
In the following sections, we describe the main components of the framework. 
Additional details are provided in Appendix~\ref{app:components}, while the Sensor's algorithm is summarized in Appendix~\ref{app:algorithm}.

\begin{figure}[tb]
    \centering        
    {\includegraphics[width=0.9\linewidth]{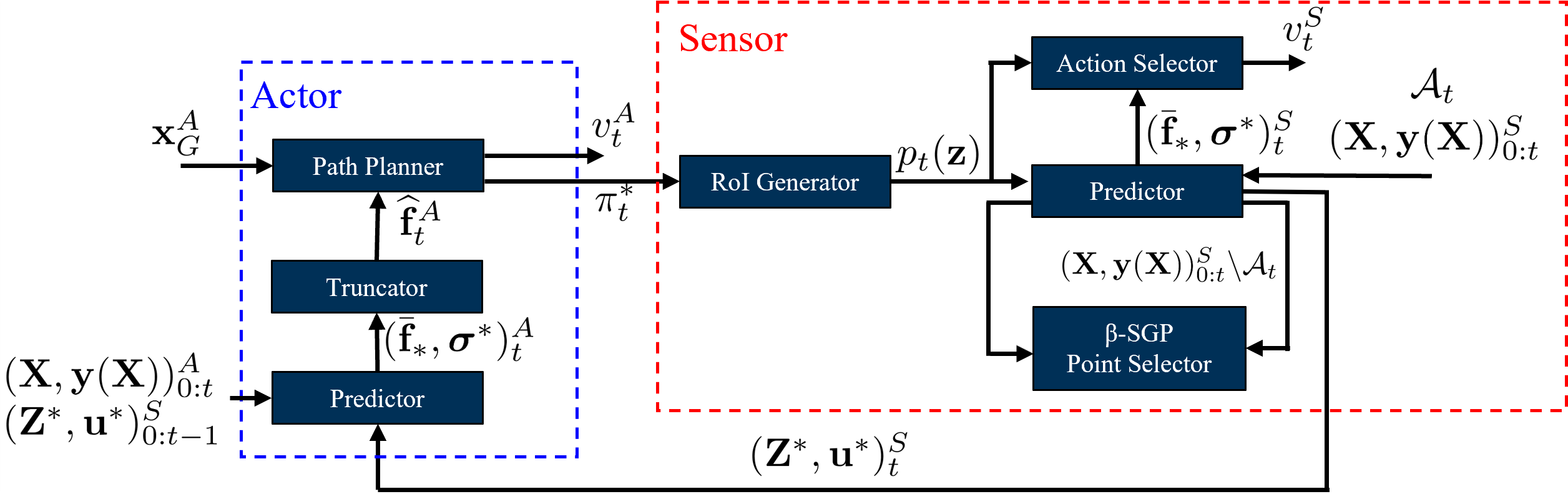}}
    \caption{Proposed framework schematic showing the Actor and Sensor components at timestep $t$.} 
    \label{fig:framework}
\end{figure}

\section{Region of Interest (RoI)}\label{sec:RoI}

The primary objective of the mission is for the Actor to reach its target with minimum accumulated cost. 
For the Sensor to effectively assist in this objective, both its transmitted data-point selection and action selection must account for this task.
Let $\mathbf{z}$ denote a location in the environment.
We define a region of interest (RoI) as a distribution $p(\mathbf{z})$, which encodes task-relevant regions of the environment. 
We introduce two formulations.

\textbf{Path-agnostic:} In the path-agnostic case, we model the RoI as a Gaussian distribution centered at the Actor's goal location $\mathbf{x}_G^{A}$, with variance controlled by a parameter $\tilde{\sigma}^2$ (see Figure~\ref{fig:mars_env}), that is,
\begin{equation}\label{eq:p_Z_simple}
    p(\mathbf{z}) = \mathcal{N}(\mathbf{z} \mid \mathbf{x}_G^{A}, \tilde{\sigma}^2 \mathbf{I}).
\end{equation}

\textbf{Path-dependent:} In the path-dependent case, the Actor transmits its current planned path (or a subset of representative waypoints) $\pi_t^*$ to the Sensor. 
Let $\{\mathbf{x}^{\pi}_{j}\}_{j=1}^{M}$ denote a set of $M$ transmitted points from $\pi_t^*$.
The RoI is then constructed to prioritize regions near this path, that is,
\begin{equation}\label{eq:p_Z_path}
    p(\mathbf{z}) = \mathcal{N}\bigl(\mathbf{z} \mid \boldsymbol{\mu}^\pi, \Sigma^\pi \bigr),
\end{equation}
where,
\begin{equation}
\boldsymbol{\mu}^\pi = \frac{1}{M}\sum_{j=1}^{M} \mathbf{x}^{\pi}_{j}, \quad \Sigma^\pi = \frac{1}{M}\sum_{j=1}^{M} (\mathbf{x}^{\pi}_{j} - \boldsymbol{\mu}^\pi)(\mathbf{x}^{\pi}_{j} - \boldsymbol{\mu}^\pi)^\top + \tilde{\sigma}^2 \mathbf{I},
\end{equation}
where $\tilde{\sigma}^2 \mathbf{I}$ enforces positive definiteness and can be tuned to adjust the ROI spread.

\section{Selection of Transmitted Sensor Points}

The RoI distribution \( p(\mathbf{z}) \), together with the communication bandwidth limitations and the assumed GP model of the environment, is leveraged by the Sensor to select, at each timestep \( t \), the most informative subset of observations to transmit.
Recall that each Sensor observation \( y_i^S(\mathbf{x}_i) \) is corrupted by Gaussian noise $\varepsilon_i^S \sim \mathcal{N}(0, (\sigma^S)^2).$

Let \( (\mathbf{X}, \mathbf{y}(\mathbf{X}))_{0:t}^{S} \) denote the set of observations collected by the Sensor up to timestep \( t \), which serve as training data for its Predictor. 
A similar process is followed by the Actor's Predictor based on a different set of observations (see Figure~\ref{fig:framework}). 
For clarity of exposition, we omit time subscripts and robot superscripts in the remainder of this section.

Appendix~\ref{app:GP} provides additional background on Gaussian Processes, including inference and hyperparameter optimization. 
Briefly, training a GP involves maximizing the log marginal likelihood
\begin{equation}\label{eq:lml_GP}
\log p(\mathbf{y} \mid \mathbf{X}, \boldsymbol{\theta}),
\end{equation}
which incurs a computational cost of \( \mathcal{O}(n^3) \). 
To address this limitation, Sparse GPs (SGPs) have been developed~\citep{Titsias09, Hensman2013BigData} (see Appendix~\ref{app:SGPR} and ~\ref{app:SVGP}), which approximate the full model using a reduced set of $m \ll n$ inducing points, significantly improving computational efficiency via maximization of a lower bound (ELBO) to~\eqref{eq:lml_GP}.
Although the Sensor’s objective is not to reduce the computational complexity of GP inference, it adopts an SGP formulation to select the most informative transmitted points for the Actor. 
In this sense, the process of selecting transmitted points is closely related to identifying inducing points in SGP methods.
Note that the RoI distribution \( p(\mathbf{z}) \) will be used as a prior distribution over input locations to approximate the expression in~\eqref{eq:lml_GP}.

\subsection[Beta-Sparse Gaussian Processes]{$\beta$-Sparse Gaussian Processes}

Let $\mathbf{Z} = [\mathbf{z}_1^\top, \dots, \mathbf{z}_m^\top]^\top$ denote a set of inducing inputs, with corresponding function values $\mathbf{u} = f(\mathbf{Z}) = [f(\mathbf{z}_1), \dots, f(\mathbf{z}_m)]^\top$. 
In this work, we adopt a variational formulation to approximate the GP, where the inducing variables are treated as variational parameters. 
Our objective is to select inducing inputs that both accurately represent the underlying GP and preserve information relevant to a distribution $p(\mathbf{Z})$, interpreted as a prior over inducing locations. 
Let the true posterior be $p(\mathbf{f}, \mathbf{u}, \mathbf{Z} \mid \mathbf{X}, \mathbf{y}) = p(\mathbf{f} \mid \mathbf{X},  \mathbf{u}, \mathbf{Z}) \, p(\mathbf{u} \mid \mathbf{Z}, \mathbf{y}) \, p(\mathbf{Z}) $ which we approximate with a variational distribution with the same factorization $q(\mathbf{f}, \mathbf{u}, \mathbf{Z}) = p(\mathbf{f} \mid \mathbf{X},  \mathbf{u}, \mathbf{Z}) \, q(\mathbf{u} \mid \mathbf{Z}) \, q(\mathbf{Z})$.
For notational simplicity, we omit conditioning on $\boldsymbol{\theta}$ throughout this section, unless necessary for clarity.

To encourage the inducing points to remain relevant to a target prior distribution, we impose a constraint on the divergence between their variational distribution and the prior
\begin{align}\label{eq:KL_constraint}
\mathrm{KL}\big(q(\mathbf{Z}) \,\|\, p(\mathbf{Z})\big) \le \alpha,
\end{align}
where $\alpha > 0$ controls the allowable deviation.
By relaxing the constraint in~\eqref{eq:KL_constraint} via a Lagrangian formulation, we obtain our main objective function:
\begin{align} \label{eq:betaSGP-objective}
\max_{q(\mathbf{u} \mid \mathbf{Z}), q(\mathbf{Z}), \mathbf{Z}, \boldsymbol{\theta}} \; 
\log p(\mathbf{y} \mid \mathbf{X}, \boldsymbol{\theta}) 
\;-\; \tilde{\beta} \big( \mathrm{KL}\big(q(\mathbf{Z}) \,\|\, p(\mathbf{Z})\big) - \alpha \big),
\quad \mathrm{s.t.} \quad |\mathbf{Z}|\leq m^*
\end{align}
where $\tilde{\beta} \ge 0$ is the KKT multiplier and $m^{*}$ is determined by the available communication bandwidth, assuming the number of transmitted points is proportional to the communication cost. 
In this formulation, $\tilde{\beta}$ controls the trade-off between data fit and adherence to the prior.

To avoid a prohibitive combinatorial search over inputs $\mathbf{Z}$ while solving~\eqref{eq:betaSGP-objective}, we associate each point in $\mathbf{Z}$ with an inclusion probability $\lambda = \{\lambda_k\}_{i=1}^m$.
We define $q(\mathbf{Z}) = q_{\lambda}(\mathbf{Z})$ as an independent Bernoulli point process over the candidate set $\mathbf{Z}$~\citep{Uhrenholt2021Probabilistic}:
\begin{equation}\label{eq:q_Z}
q_{\lambda}(\mathbf{Z}) =
\prod_{\mathbf{z}_i \in \mathbf{Z}} \lambda_i
\;\prod_{\mathbf{z}_i \notin \mathbf{Z}} (1 - \lambda_i).
\end{equation}

We then optimize the ELBO of the unconstrained problem in~\eqref{eq:betaSGP-objective}.
At execution time, the transmitted set $\mathbf{Z}^*$ is obtained by selecting the $m^*$ elements with the highest inclusion probabilities $\lambda_i$. 
The following theorem states the corresponding ELBO, which we optimize to obtain the inducing points.

\begin{theorem}\label{thm:betaSGP_ELBO}
Consider the associated unconstrained optimization problem in~\eqref{eq:betaSGP-objective}. 
The corresponding evidence lower bound (ELBO) is given by
\begin{align}\label{eq:betaSGP-elbo}
\mathcal{F}_2 
= \mathbb{E}_{q(\mathbf{Z})}\big[\mathcal{F}_1\big]
- \beta \, \mathrm{KL}\big(q(\mathbf{Z}) \,\|\, p(\mathbf{Z})\big),
\end{align}
where
\begin{align}\label{eq:SGP-elbo}
\mathcal{F}_1
= \mathbb{E}_{q(\mathbf{f}, \mathbf{u} \mid \mathbf{Z})}
\big[\log p(\mathbf{y} \mid \mathbf{f})\big]
- \mathrm{KL}\big(q(\mathbf{u} \mid \mathbf{Z}) \,\|\, p(\mathbf{u} \mid \mathbf{Z})\big),
\end{align}
$q(\mathbf{f}, \mathbf{u} \mid \mathbf{Z}) = p(\mathbf{f} \mid \mathbf{X},  \mathbf{u}, \mathbf{Z}) \, q(\mathbf{u} \mid \mathbf{Z})$,
and $\beta \ge 1$ is the regularization parameter controlling the strength of the constraint on $q(\mathbf{Z})$.
\end{theorem}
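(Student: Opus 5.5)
The plan is to lower-bound the objective of the unconstrained version of~\eqref{eq:betaSGP-objective}, namely $\log p(\mathbf{y}\mid\mathbf{X}) - \tilde\beta\,\mathrm{KL}(q(\mathbf{Z})\,\|\,p(\mathbf{Z}))$ with the constant $\tilde\beta\alpha$ dropped, in two stages. First, treat $\mathbf{Z}$ as a latent variable with prior $p(\mathbf{Z})$ and bound the log evidence. Then absorb the Lagrangian penalty, which fixes the relation between $\beta$ and $\tilde\beta$.

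\textbf{Step 1 (joint ELBO).} Write $\log p(\mathbf{y}\mid\mathbf{X}) = \log \int p(\mathbf{y}\mid\mathbf{f})\,p(\mathbf{f}\mid\mathbf{X},\mathbf{u},\mathbf{Z})\,p(\mathbf{u}\mid\mathbf{Z})\,p(\mathbf{Z})\,d\mathbf{f}\,d\mathbf{u}\,d\mathbf{Z}$. Multiply and divide by the variational density $q(\mathbf{f},\mathbf{u},\mathbf{Z}) = p(\mathbf{f}\mid\mathbf{X},\mathbf{u},\mathbf{Z})\,q(\mathbf{u}\mid\mathbf{Z})\,q(\mathbf{Z})$ and apply Jensen's inequality. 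The conditional $p(\mathbf{f}\mid\mathbf{X},\mathbf{u},\mathbf{Z})$ cancels, exactly as in the Titsias construction, and we get
\[
\log p(\mathbf{y}\mid\mathbf{X}) \ge \mathbb{E}_{q(\mathbf{Z})}\Big[\mathbb{E}_{q(\mathbf{f},\mathbf{u}\mid\mathbf{Z})}[\log p(\mathbf{y}\mid\mathbf{f})] - \mathrm{KL}\big(q(\mathbf{u}\mid\mathbf{Z})\,\|\,p(\mathbf{u}\mid\mathbf{Z})\big)\Big] - \mathrm{KL}\big(q(\mathbf{Z})\,\|\,p(\mathbf{Z})\big).
\]
The bracketed term is $\mathcal{F}_1$ in~\eqref{eq:SGP-elbo}, so this reads $\log p(\mathbf{y}\mid\mathbf{X}) \ge \mathbb{E}_{q(\mathbf{Z})}[\mathcal{F}_1] - \mathrm{KL}(q(\mathbf{Z})\|p(\mathbf{Z}))$. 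Because $q(\mathbf{Z})$ in~\eqref{eq:q_Z} is a discrete Bernoulli point process, the integral over $\mathbf{Z}$ is really a sum over subsets. Jensen still applies, but $p(\mathbf{Z})$ must then be read as a prior on that same discrete support, for instance the RoI density restricted and normalized over candidate cells. I will state this explicitly so the KL term is well defined.

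\textbf{Step 2 (add the penalty).} Substituting this bound into the Lagrangian gives
\[
\log p(\mathbf{y}\mid\mathbf{X}) - \tilde\beta\,\mathrm{KL} \ge \mathbb{E}_{q(\mathbf{Z})}[\mathcal{F}_1] - (1+\tilde\beta)\,\mathrm{KL}\big(q(\mathbf{Z})\,\|\,p(\mathbf{Z})\big).
\]
Setting $\beta = 1+\tilde\beta$ yields $\mathcal{F}_2$ in~\eqref{eq:betaSGP-elbo}. Since $\tilde\beta \ge 0$ is a KKT multiplier, this explains why $\beta \ge 1$. The additive constant $\tilde\beta\alpha$ does not affect the maximizer and can be dropped or kept.

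\textbf{Main obstacle and remaining checks.} The only genuinely delicate point is the measure-theoretic one above. The inducing set has random cardinality under $q_\lambda$, so $p(\mathbf{u}\mid\mathbf{Z})$ and $q(\mathbf{u}\mid\mathbf{Z})$ live on spaces of varying dimension. I would handle this by conditioning on each realized subset, so that for fixed $\mathbf{Z}$ the standard SGP bound holds verbatim, and then averaging over the finite support. I would also note that the hard constraint $|\mathbf{Z}|\le m^*$ is not part of the bound itself; it is enforced at execution by thresholding the $\lambda_i$, so it does not enter the derivation.
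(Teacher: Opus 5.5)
Your proof is correct and follows the paper's argument. The paper reaches the same intermediate bound through the exact decomposition $\log p(\mathbf{y}\mid\mathbf{X}) = \mathrm{KL}\big(q(\mathbf{f},\mathbf{u},\mathbf{Z})\,\|\,p(\mathbf{f},\mathbf{u},\mathbf{Z}\mid\mathbf{y},\mathbf{X})\big) + \mathbb{E}_{q(\mathbf{Z})}[\mathcal{F}_1] - \mathrm{KL}\big(q(\mathbf{Z})\,\|\,p(\mathbf{Z})\big)$ rather than Jensen, then substitutes it into the Lagrangian and uses $\tilde\beta,\alpha\ge 0$ together with nonnegativity of the KL, which implicitly gives $\beta = 1+\tilde\beta$ as you make explicit (your caveats on the discrete support of $q_\lambda(\mathbf{Z})$ and the varying dimension of $\mathbf{u}$ address points the paper leaves unstated).
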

\begin{proof}
See Appendix~\ref{app:ThmbetaSGP_ELBO}.
\end{proof}

\subsection{ELBO Optimization and Transmitted Sensor Points}

To solve~\eqref{eq:betaSGP-elbo}, let the prior distribution $p(\mathbf{Z})$ be
\begin{equation}\label{eq:p_Z}
p(\mathbf{Z}) = \prod_{i=1}^{m} p(\mathbf{z}_i),
\end{equation}
where $p(\mathbf{z}_i) = \mathcal{N}\bigl(\mathbf{z}_i \mid \boldsymbol{\mu}_p, \Sigma_p \bigr)$ defines the RoI given in Section~\ref{sec:RoI}.

Under this construction, the second term in~\eqref{eq:betaSGP-elbo} admits a closed-form solution.

\begin{lemma}\label{lemma:KL_ELBO}
Let the prior distribution $p(\mathbf{Z})$ be defined as in~\eqref{eq:p_Z}, and let the variational distribution $q_{\lambda}(\mathbf{Z})$ be given in~\eqref{eq:q_Z}. Then, the KL divergence between $q_{\lambda}(\mathbf{Z})$ and $p(\mathbf{Z})$ admits the decomposition
\begin{align}
\mathrm{KL}\big[q_{\lambda}(\mathbf{Z}) \,\|\, p(\mathbf{Z})\big]
=&
\sum_{i=1}^{m}
 \bigg[ 
\lambda_i \log \lambda_i +
(1-\lambda_i)\log(1-\lambda_i) + \frac{\lambda_i}{2}
(\mathbf{z}_i-\boldsymbol{\mu}_p)^\top
\Sigma_p^{-1}
(\mathbf{z}_i-\boldsymbol{\mu}_p)
\bigg] + c,
\end{align}
where $c$ is a constant.
\end{lemma}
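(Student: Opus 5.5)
The plan is to split the KL divergence as $\mathrm{KL}[q_\lambda\|p] = \mathbb{E}_{q_\lambda}[\log q_\lambda(\mathbf{Z})] - \mathbb{E}_{q_\lambda}[\log p(\mathbf{Z})]$, i.e., negative entropy minus cross-entropy, and evaluate each term in closed form using the product structure of \eqref{eq:q_Z} and \eqref{eq:p_Z}.

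Throughout, the candidate locations $\mathbf{z}_1,\dots,\mathbf{z}_m$ are fixed and the random object is the inclusion pattern. I encode this pattern by independent indicators $b_i \sim \mathrm{Bernoulli}(\lambda_i)$, with $\mathbf{z}_i \in \mathbf{Z}$ iff $b_i=1$. Then
\[
\log q_\lambda(\mathbf{Z}) = \sum_{i=1}^m \bigl[b_i \log\lambda_i + (1-b_i)\log(1-\lambda_i)\bigr],
\]
and linearity of expectation with $\mathbb{E}[b_i]=\lambda_i$ immediately gives $\sum_i [\lambda_i\log\lambda_i + (1-\lambda_i)\log(1-\lambda_i)]$. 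This is the first group of terms in the lemma.

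For the cross-entropy, I interpret the prior \eqref{eq:p_Z} as a density over the selected set, so that only included points contribute a Gaussian factor:
\[
\log p(\mathbf{Z}) = \sum_{i=1}^m b_i \log \mathcal{N}(\mathbf{z}_i\mid\boldsymbol{\mu}_p,\Sigma_p),
\]
up to a normalizing term for excluded points that does not depend on $\mathbf{z}_i$ or $\lambda$. Expanding the Gaussian log-density gives $-\tfrac12(\mathbf{z}_i-\boldsymbol{\mu}_p)^\top\Sigma_p^{-1}(\mathbf{z}_i-\boldsymbol{\mu}_p) - \tfrac12\log\bigl((2\pi)^2\det\Sigma_p\bigr)$. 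Taking expectations then yields $-\mathbb{E}_q[\log p] = \sum_i \tfrac{\lambda_i}{2}(\mathbf{z}_i-\boldsymbol{\mu}_p)^\top\Sigma_p^{-1}(\mathbf{z}_i-\boldsymbol{\mu}_p) + \sum_i \tfrac{\lambda_i}{2}\log\bigl((2\pi)^2\det\Sigma_p\bigr)$. Adding this to the entropy term reproduces the Mahalanobis part of the stated expression.

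The main obstacle is making the constant $c$ honest. The normalizer term $\sum_i \tfrac{\lambda_i}{2}\log\bigl((2\pi)^2\det\Sigma_p\bigr)$ is linear in $\lambda$, so it is genuinely constant only under one of two readings:
\begin{itemize}
\item the ``constant'' is understood as independent of the optimization variables $\mathbf{z}_i$ and the kernel hyperparameters;
\item the prior is defined so that excluded points contribute a compensating factor, so the normalizers cancel or become $\lambda$-independent.
\end{itemize}
I would take the second reading, modelling $p(\mathbf{Z})$ as the same Bernoulli process with Gaussian-weighted inclusion. Failing that, I would state explicitly that $c$ absorbs all terms independent of the locations $\mathbf{z}_i$, note that $\log\det\Sigma_p$ is fixed by the RoI, and observe that the residual linear term merely shifts the effective Bernoulli prior log-odds. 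With that convention fixed, the decomposition follows by summing the two computed pieces.
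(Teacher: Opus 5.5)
Your computation is the paper's: it writes $\mathrm{KL}[q_\lambda\|p]=\mathrm{CE}(\lambda)-H(\lambda)$ with the Bernoulli entropy and $\mathrm{CE}(\lambda)=-\sum_i\lambda_i\log\mathcal{N}(\mathbf{z}_i\mid\boldsymbol{\mu}_p,\Sigma_p)$. That is the same ``only included points contribute a Gaussian factor'' interpretation you adopt.

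Your concern about $c$ is correct, and the paper does not resolve it. Its proof ends with $\mathrm{CE}(\lambda)$ still containing $\sum_i\lambda_i\bigl[\tfrac{d}{2}\log(2\pi)+\tfrac{1}{2}\log|\Sigma_p|\bigr]$ and silently treats this as $c$. So the paper is implicitly using your first reading: $c$ is free of the (fixed) candidate locations and of $\boldsymbol{\theta}$, but it is linear in $\lambda$, which is the variable actually optimized. It therefore acts as a uniform shift $-\bigl[\tfrac{d}{2}\log(2\pi)+\tfrac{1}{2}\log|\Sigma_p|\bigr]$ of the prior inclusion log-odds, as you observe.

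Your preferred second reading is a genuinely different route, because it replaces the hypothesis \eqref{eq:p_Z}. It also needs to be pinned down. Write $M_i=(\mathbf{z}_i-\boldsymbol{\mu}_p)^\top\Sigma_p^{-1}(\mathbf{z}_i-\boldsymbol{\mu}_p)$. The reading works exactly for the independent Bernoulli prior with odds $\pi_i/(1-\pi_i)=e^{-M_i/2}$; equivalently, excluded points are weighted by the Gaussian normalizer $(2\pi)^{-d/2}|\Sigma_p|^{-1/2}$ and then normalized. This gives the stated sum with $c=\sum_i\log(1+e^{-M_i/2})$, which is independent of $\lambda$. By contrast, the naive choice $\pi_i\propto\mathcal{N}(\mathbf{z}_i\mid\boldsymbol{\mu}_p,\Sigma_p)$ always leaves the extra $\lambda$-dependent term $-\sum_i(1-\lambda_i)\log(1-\pi_i)$.

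What the second route buys you is a bona fide KL between two distributions on the same space of inclusion patterns, together with a truly $\lambda$-free constant. In the paper's version, $p$ is a density over locations while $q_\lambda$ is a pmf over subsets. The cost is that you are no longer proving the lemma under \eqref{eq:p_Z} as stated.

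Finally, drop the phrase about a normalizing term for excluded points ``that does not depend on $\mathbf{z}_i$ or $\lambda$''. Any per-excluded-point factor has a $\lambda$-dependent expectation, which is exactly why only the compensating choice above cancels.
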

\begin{proof}
See Appendix~\ref{app:KL_ELBO}.
\end{proof}

To address the first term of~\eqref{eq:betaSGP-elbo}, we follow~\citep{Uhrenholt2021Probabilistic} and employ the score function estimator~\citep{williams1992simple} with
\begin{align}
\nabla_{\lambda} \mathbb{E}[\mathcal{F}_1(\mathbf{Z})]
= \mathbb{E}_{q_{\lambda}(\mathbf{Z})}\!\left[
\mathcal{F}_1(\mathbf{Z}) \nabla_{\lambda} \log q_{\lambda}(\mathbf{Z})
\right].
\end{align}

This expectation is approximated via Monte Carlo sampling. The ELBO term $\mathcal{F}_1$ is evaluated using either SGPR~\citep{Titsias09} or SVGP~\citep{Hensman2013BigData} (see Appendix~\ref{app:SGPR} and Appendix~\ref{app:SVGP}). 
To mitigate the high variance of the score function estimator and stabilize optimization, we apply a baseline subtraction in $\mathcal{F}_1$, computed as a decaying average of the samples~\citep{glasserman2013monte, Uhrenholt2021Probabilistic}.

\noindent
\textbf{Remark 1:}
In our setting, at each timestep $t$, the candidate set $\mathbf{Z} = \mathbf{Z}_t$ consists of all points explored by the Sensor up to time $t$ that have neither been transmitted in previous steps nor observed by the Actor (i.e., the set $\mathcal{A}$ in Fig.~\ref{fig:framework}). 
Note that the derivations in this section are presented in a general form; however, in our specific setting, the inducing point candidates are a subset of the observed training points due to the discretization induced by the mapping process.

\section{Selection of Sensor's Actions}

The Sensor selects its actions by balancing exploitation, using the RoI, and exploration, which is driven by the GP uncertainty.  
The candidate set for action selection consists of the frontier nodes, denoted by $\mathbf{R}_t =
\left\{
\mathbf{x} \in \mathcal{X} \setminus \mathbf{X}_{1:t}^{S}
\;\middle|\;
\mathcal{N}(\mathbf{x}) \cap \mathbf{X}_{1:t}^{S} \neq \emptyset
\right\}$, 
where $\mathcal{N}(\mathbf{x})$ is the neighborhood of $\mathbf{x}$.

To formalize this trade-off, we construct a function inspired by the GP-UCB algorithm~\citep{Srinivas2012} and robot exploration~\citep{Asgharivaskasi2023TRO}. 
In particular, the next target point is selected as
\begin{equation}\label{eq:gp_ucb}
\bar{\mathbf{x}}_t^S 
= \arg\max_{\mathbf{x} \in \mathbf{R}_t} 
\frac{
p_t(\mathbf{x}) 
+ \gamma\,\sigma_{*t-1}(\mathbf{x})}
{\|\mathbf{x} - \mathbf{x}_t^{S}\|},
\end{equation}
where $p_t(\mathbf{x})$ is the RoI distribution, \( \sigma_{*t-1}(\mathbf{x}) = \operatorname{stdev}(\mathbf{f}_{*t-1})(\mathbf{x}) \) denotes the posterior standard deviation of the predictive model at location \( \mathbf{x} \), and \( \gamma > 0 \) controls the exploration--exploitation trade-off, and can be chosen as $\gamma \propto 
\left(
\max_{\mathbf{x}\in\mathbf{R}_t}
p_t(\mathbf{x})
\right)
\left(
\max_{\mathbf{x}\in\mathbf{R}_t}
\sigma_{*t-1}(\mathbf{x})
\right)^{-1}$
for normalization.

\noindent
\textbf{Remark 2:}
The term $p_t(\mathbf{x})$ promotes task-driven behavior by biasing the Sensor toward regions that are more relevant to the Actor's objective. 
Numerator's second term promotes exploration, as it is known~\citep{Srinivas2012} that greedily maximizing the GP's posterior variance is equivalent to greedily maximizing the information gain. 

Once the target frontier point \( \bar{\mathbf{x}}_t \) is selected, the Sensor computes its action based on the direction from its current position \( \mathbf{x}_t^{S} \) toward \( \bar{\mathbf{x}}_t \). Specifically, the action is defined as
\begin{equation}
v_t^{S} 
=  \, \frac{\bar{\mathbf{x}}_t^S - \mathbf{x}_t^{S}}{\|\bar{\mathbf{x}}_t^S - \mathbf{x}_t^{S}\|}.
\end{equation}


\section{Experimental Results}\label{sec:result}

We performed simulations on two satellite maps: a Mars slope map obtained by the HiRISE onboard the Mars Reconnaissance Orbiter~\citep{HiRISE2024} (Figure~\ref{fig:mars_sim}(\subref{fig:mars_gt_map})) and an Earth traversability map (Figure~\ref{fig:earth_sim}(\subref{fig:earth_gt_map})). 
The Mars map was used for an ablation study to evaluate the \(\beta\)-SGP model.
For this study, we considered a stationary Sensor observing the entire environment (e.g., a satellite) and selecting transmitted points based on the target location using the path-agnostic RoI formulation, as in Figure~\ref{fig:mars_env}. 

The Earth traversability map was used for the second set of simulations involving both an Actor and a Sensor. 
In this scenario, both robots started simultaneously and moved one grid cell per timestep while using the path-dependent RoI. 
We conducted 50 simulations generated by randomly varying the initial position of the Sensor $\mathbf{x}_0^{S}$ within the map.
The specifications of the robotic agents are summarized in Appendix~\ref{app:specs}.
We compared the proposed framework to three alternative approaches:
(i) a fully informed framework with GP (FI-GP), where all map points are transmitted and GP reconstruction is performed,
(ii) a fully informed framework without GP (FI), and
(iii) an uninformed framework (U) with GP, where no information is transmitted.

\subsection{Performance Metrics}

To evaluate map reconstruction, we computed the Mean Squared Error (MSE) and the Negative Log Predictive Density (NLPD),
\begin{equation}
\mathrm{NLPD}
=
\frac{1}{N}
\sum_{i=1}^{N}
\left[
\frac{\left(f(\mathbf{x}_i)-\bar{f}_*(\mathbf{x}_i)\right)^2}{2\sigma_i^2}
+
\frac{1}{2}\log\left(\sigma_i^2\right)
+
\frac{1}{2}\log(2\pi)
\right],
\end{equation}
where \(f(\mathbf{x}_i)\) is the ground-truth value at location \(\mathbf{x}_i\), \(\bar{f}_*(\mathbf{x}_i)\) is the GP predictive mean, and \(\sigma_i^2\) is the corresponding predictive variance.

For comparative analysis, we additionally evaluated the accumulated path cost and communication cost. 
The Actor's accumulated cost is defined as $\mathcal{C} = \sum_{\mathbf{x}_i \in \pi_G} c(\mathbf{x}_i)$, where \(c(\mathbf{x}_i)\) is computed using the first scale of \eqref{eq:cell_cost}, and \(\pi_G\) denotes the set of all cells traversed by the Actor to reach the target, including repeated visits.
The communication cost is defined as $\mathcal{B} = \sum_{t=0}^{T^S} m_t^{*}$, where \(m_t^{*}\) is the number of transmitted points at timestep \(t\),  and \(T^S\) is the Sensor horizon.
Then, the average path and communication ratio is given by
\begin{equation}\label{eq:ratios}
r_{\textrm{cost}} =
\frac{1}{n_{\textrm{sim}}}
\sum_{i=1}^{n_{\textrm{sim}}}
\frac{\mathcal{C}(i)}
{\mathcal{C}_{\textrm{max}}(i)}, \quad
r_{\textrm{com}} =
\frac{1}{n_{\textrm{sim}}}
\sum_{i=1}^{n_{\textrm{sim}}}
\frac{\mathcal{B}(i)}
{\mathcal{B}_{\textrm{max}}(i)},
\end{equation}
where \(n_{\textrm{sim}}\) is the total number of simulations, \(\mathcal{C}_{\textrm{max}}(i)\) corresponds to the accumulated cost obtained using the U framework, and \(\mathcal{B}_{\textrm{max}}\) denotes the communication cost of the FI framework.

\subsection{Ablation Study}

First, we present the results of the ablation study of the proposed $\beta$-SGP model for different values of $\beta$ and different numbers of transmitted points $m^*$. 
Figure~\ref{fig:mars_sim} shows the results for different values of $\beta$, while additional results for a broader range of $\beta$ and $m^*$ values are provided in Appendix~\ref{app:ablation_study}.

We observe that increasing $\beta$ encourages the transmission of more points near the target region, thereby prioritizing the RoI. 
In contrast, smaller values of $\beta$ place greater emphasis on reducing the uncertainty of the whole reconstructed map. 
This behavior is also reflected in Table~\ref{tab:ablation_results}, where lower values of $\beta$ generally lead to smaller NLPD and MSE values.

\begin{figure*}[!t]
    \centering   
     \begin{subfigure}[b]{0.24\linewidth}
         \centering
         \includegraphics[width=\textwidth, trim= 130 45 110 50, clip]{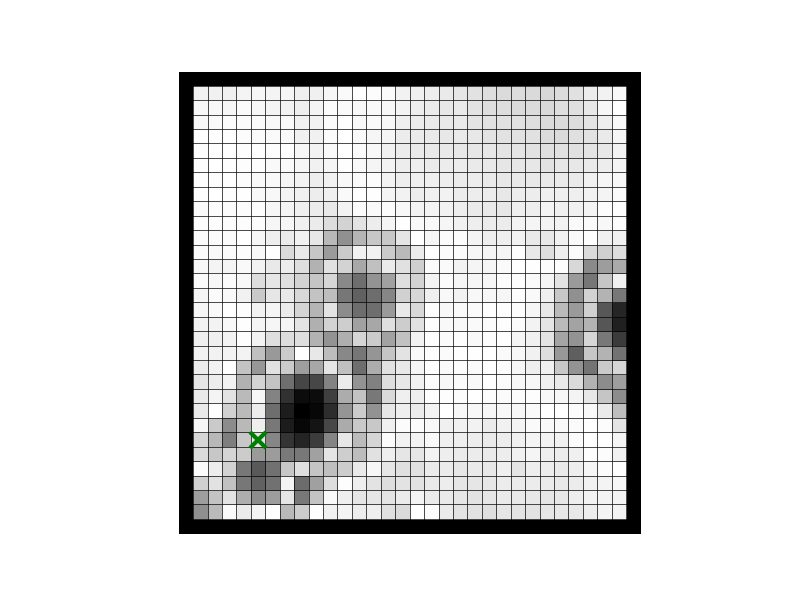}
         \caption{Mars map}
         \label{fig:mars_gt_map}
     \end{subfigure}
     \begin{subfigure}[b]{0.24\linewidth}
         \centering
         \includegraphics[width=\textwidth, trim= 130 45 110 50, clip]{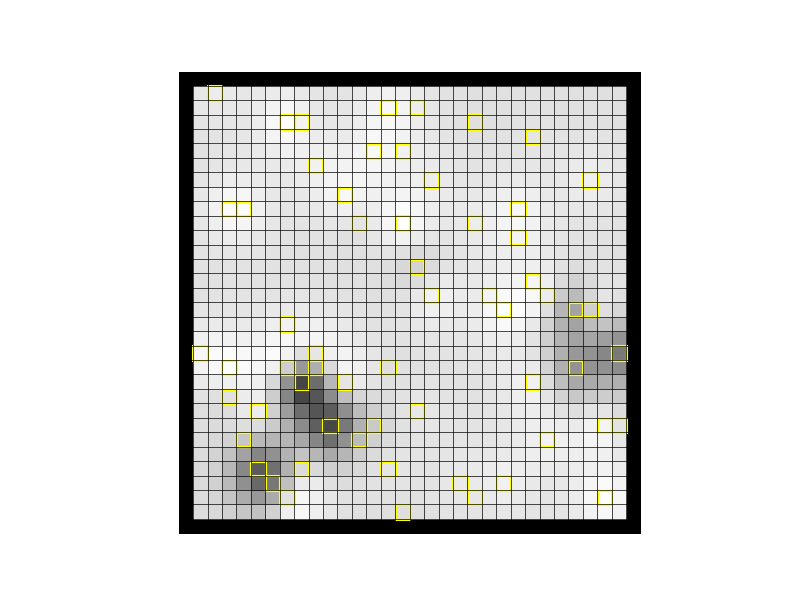}
         \caption{$\beta=1$}
         \label{fig:mars_abl_study1}
     \end{subfigure}
     \begin{subfigure}[b]{0.24\linewidth}
         \centering
         \includegraphics[width=\textwidth, trim= 130 45 110 50, clip]{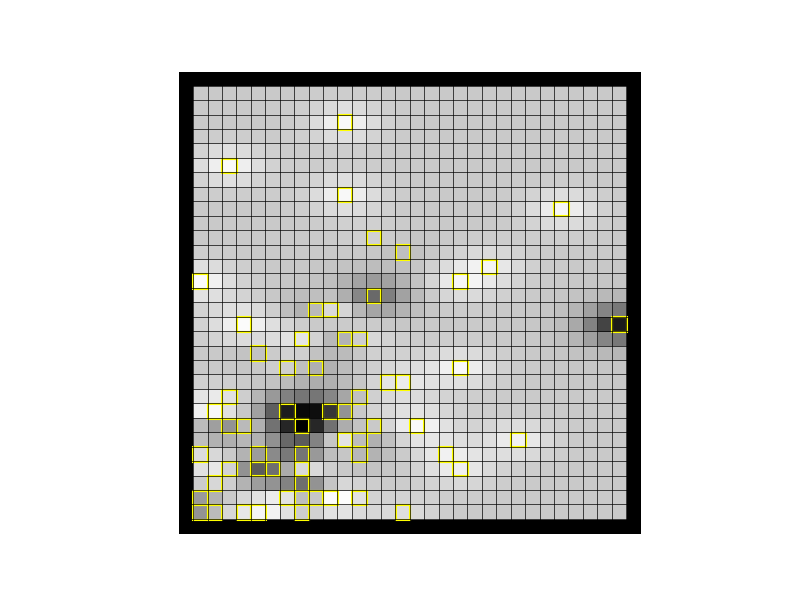}
         \caption{$\beta=10$}
         \label{fig:mars_abl_study2}
     \end{subfigure}
     \begin{subfigure}[b]{0.24\linewidth}
         \centering
         \includegraphics[width=\textwidth, trim= 130 45 110 50, clip]{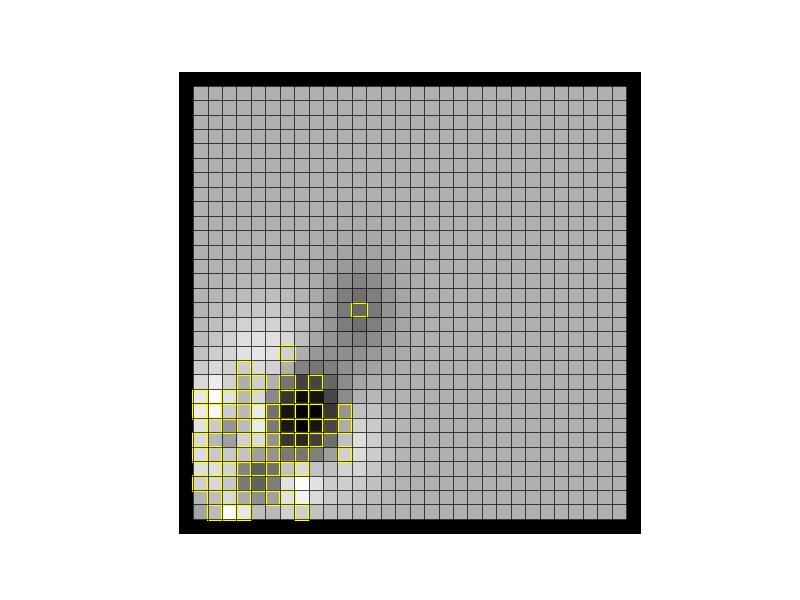}
         \caption{$\beta=100$}
         \label{fig:mars_abl_study3}
     \end{subfigure}
    \caption{(a) Ground truth Mars $32 \times 32$ slope map~\citep{HiRISE2024} with goal location $\times$; (b-d) Predicted map using $\beta$-SGP with the yellow cells indicating transmitted points and $m^{*} = 60$.}
    \label{fig:mars_sim}
\end{figure*}

\begin{table}[t]
\centering
\caption{NLPD and MSE for different values of $\beta$ and $m^{*}$.}
\label{tab:ablation_results}

\small

\begin{minipage}{0.48\linewidth}
\centering
\textbf{NLPD}

\vspace{1mm}

\begin{tabular}{c|cccc}
\hline
$\beta \backslash m^{*}$ & 30 & 60 & 300 & 600 \\
\hline
1   & -0.372 & -0.734 & -1.619 & -2.192 \\
10  & -0.185 & -0.512 & -1.585 & -2.132 \\
50  & -0.124 & -0.351 & -1.284 & -2.089 \\
100 & -0.080 & -0.179 & -1.239 & -2.121 \\
\hline
\end{tabular}
\end{minipage}
\hfill
\begin{minipage}{0.48\linewidth}
\centering
\textbf{MSE}

\vspace{1mm}

\begin{tabular}{c|cccc}
\hline
$\beta \backslash m^{*}$ & 30 & 60 & 300 & 600 \\
\hline
1   & 0.0255 & 0.0154 & 0.0031 & 0.0012 \\
10  & 0.0445 & 0.0202 & 0.0028 & 0.0016 \\
50  & 0.0547 & 0.0355 & 0.0084 & 0.0011 \\
100 & 0.0577 & 0.0527 & 0.0101 & 0.0010 \\
\hline
\end{tabular}
\end{minipage}

\end{table}

\subsection{Comparative Analysis}

We compare the proposed $\beta$-SGP selective framework against three baselines. 
In the $\beta$-SGP variant, the Sensor transmits one or two points per timestep (Table~\ref{tab:spec_sim2}). 
Results are summarized in Table~\ref{tab:comparison}. 
The proposed $\beta$-SGP ($\beta=10$) reduces path cost by 18\% while decreasing communication by 76\%.
In contrast, FI-GP achieves a 28\% path cost reduction but transmits all points, incurring higher communication cost. 
We further observe that $\beta=10$ outperforms $\beta=1$ (standard SGP) and yields lower variance across runs.
This increased robustness is consistent with prior observations in $\beta$-regularized inference frameworks such as~\citep{higgins2017betavae}.
Finally, although FI reduces $t_{\mathrm{final}}$, it increases path cost, as it does not explicitly account for measurement noise.
Figure~\ref{fig:earth_sim} shows qualitative results at $t=70$ and additional results are in Appendix~\ref{app:comparative_analysis}.

\begin{figure}[!t]
    \centering   
     \begin{subfigure}[b]{0.19\linewidth}
         \centering
         \includegraphics[width=\textwidth, trim= 130 45 110 50, clip]{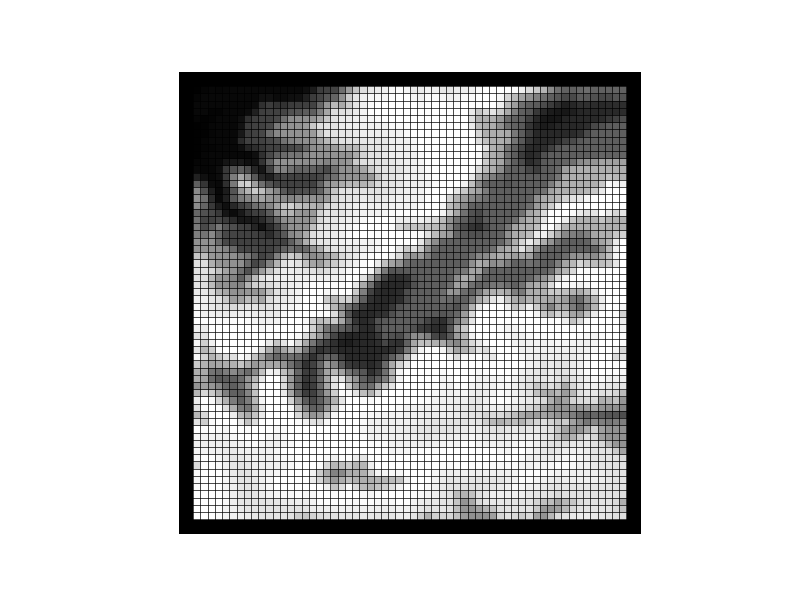}
         \caption{Earth map}
         \label{fig:earth_gt_map}
     \end{subfigure}
     \begin{subfigure}[b]{0.19\linewidth}
         \centering
         \includegraphics[width=\textwidth]{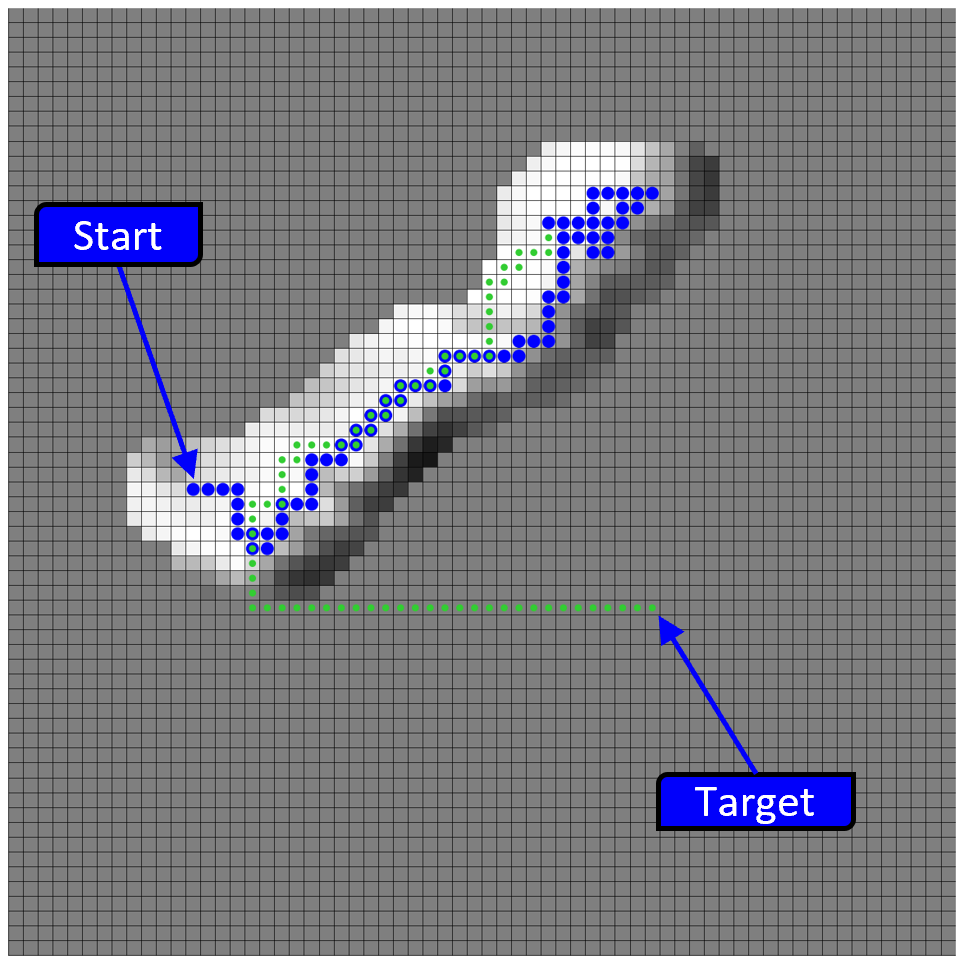}
         \caption{U}
         \label{fig:U_framework}
     \end{subfigure}
          \begin{subfigure}[b]{0.19\linewidth}
         \centering
         \includegraphics[width=\textwidth]{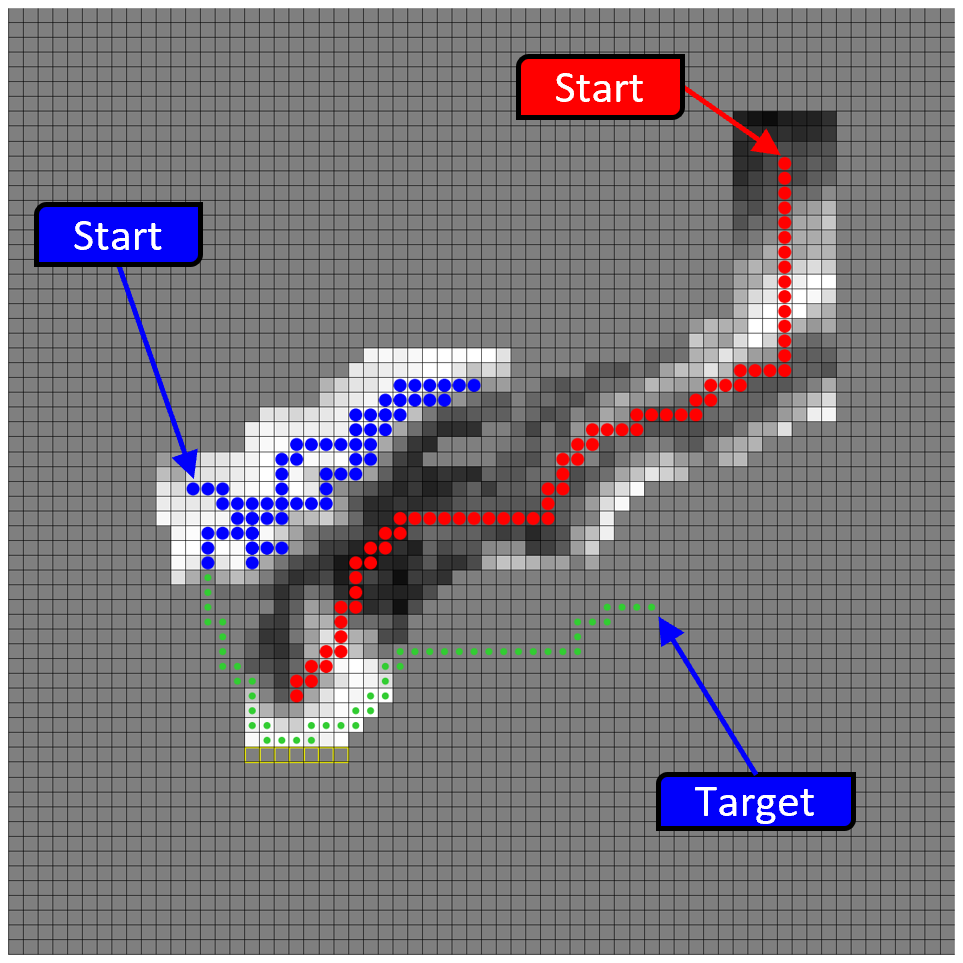}
         \caption{FI}
         \label{fig:FI_framework}
     \end{subfigure}
          \begin{subfigure}[b]{0.19\linewidth}
         \centering
         \includegraphics[width=\textwidth]{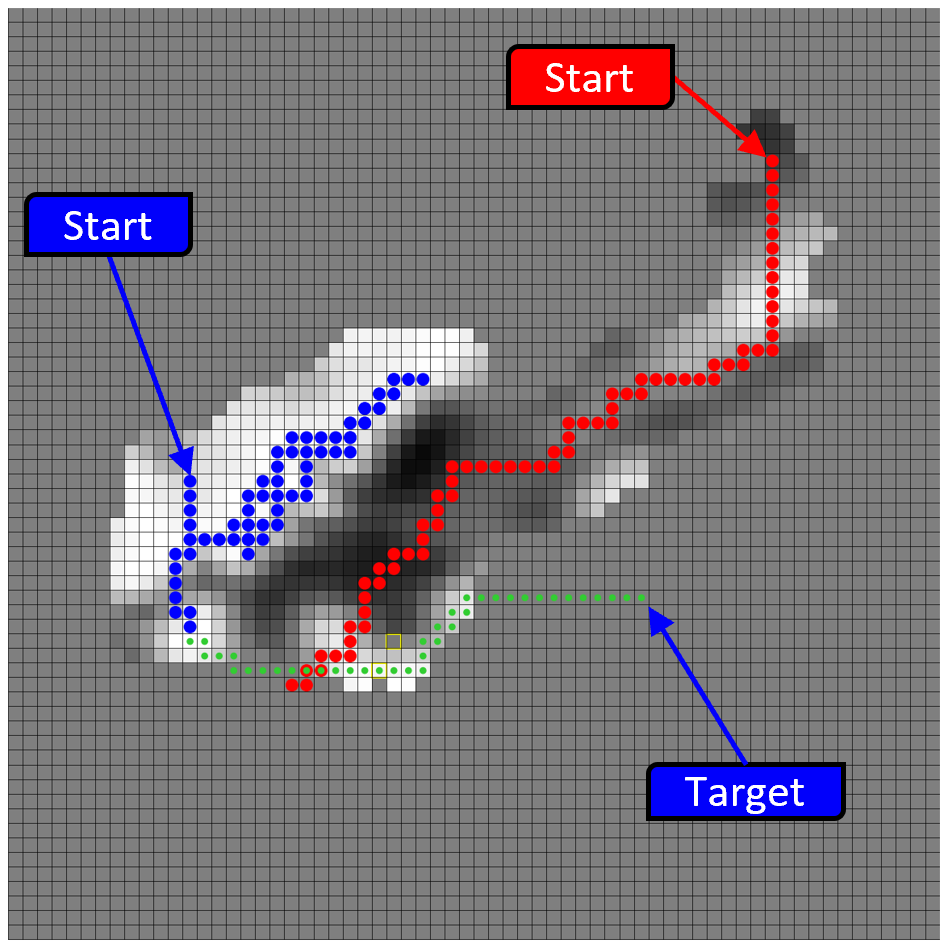}
         \caption{$\beta$-SGP ($\beta=10$)}
         \label{fig:betaSGP_framework}
     \end{subfigure}
          \begin{subfigure}[b]{0.19\linewidth}
         \centering
         \includegraphics[width=\textwidth]{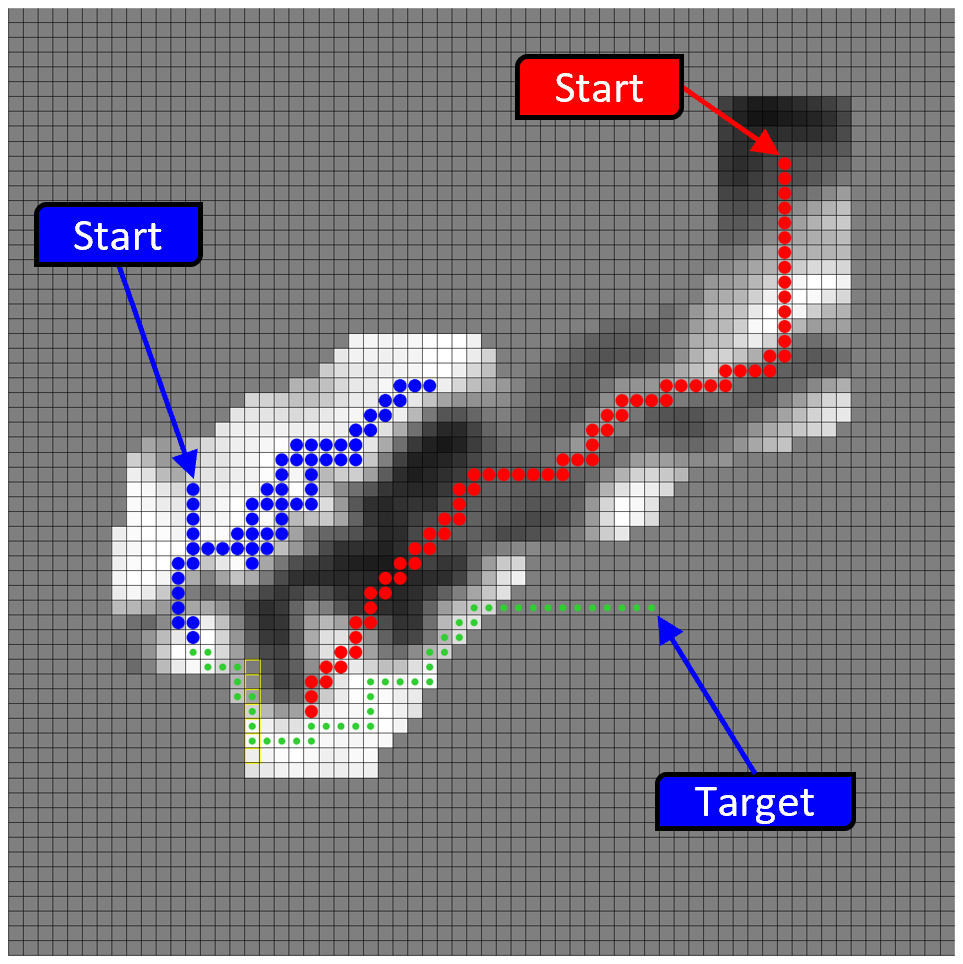}
         \caption{FI-GP}
         \label{fig:FIGP_framework}
     \end{subfigure}
    \caption{(a) Earth $64 \times 64$ map; (b-e) Reconstructed maps obtained using (b) the Uninformed (U) framework, (c) the Fully Informed framework without GP (FI), (d) the proposed $\beta$-SGP framework, and (e) the Fully Informed framework with GP (FI-GP). 
    Blue and red cells indicate regions traversed by the Actor and the Sensor, respectively, while the Actor's path is green and $\mathbf{x}_0^{S} = (52, 55)$.}
    \label{fig:earth_sim}
\end{figure}

\begin{table}[!t]
\caption{Comparative Analysis.}
\label{tab:comparison}
\centering
\small
\begin{tabular}{c|ccccc} 
    \hline
    {Metrics} & {U} & {FI} & {$\beta$-SGP ($\beta = 1$)} & {$\beta$-SGP ($\beta = 10$)} & {FI-GP} \\
    \hline

    $\mu_{t_\textrm{final}} \pm \sigma_{t_\textrm{final}}$
    & $172$
    & $151.08 \pm 42.72$
    & $152.40 \pm 46.77$
    & $150.44 \pm 37.66$
    & $134.80 \pm 46.63$
    \\
    
    $\mu_{\mathcal{C}} \pm \sigma_{\mathcal{C}}$
    & $25.54$
    & $26.67 \pm 10.54$
    & $21.85 \pm 8.17$
    & $20.88 \pm 6.08$
    & $18.28 \pm 7.45$
    \\

    $\mu_{\mathcal{B}} \pm \sigma_{\mathcal{B}}$
    & $0$
    & $445.80 \pm 51.52$
    & $107.00 \pm 0.00$
    & $107.00 \pm 0.00$
    & $456.80 \pm 52.49$
    \\

    \(r_\textrm{cost}\) (\%)
    & $100.00$
    & $104.43$
    & $85.54$
    & $81.74$
    & $71.57$
    \\ 

    \(r_\textrm{bits}\) (\%)
    & $0.00$
    & $98.36$
    & $23.75$
    & $23.75$
    & $100.00$
    \\

    \hline
\end{tabular}
\end{table}

\section{Limitations}

While the current RoI Generator performs effectively within the proposed $\beta$-SGP framework, a more principled formulation would model the RoI distribution as a mixture of Gaussians. 
However, this renders the KL divergence term in~\eqref{eq:betaSGP-elbo} analytically intractable.
In addition, an SGP model could also be used in the Actor's predictor to further reduce computational complexity, which is important for real-world applications, while online variational conditioning (OVC)~\citep{maddox2021OVC} could further improve computational efficiency. 
Nevertheless, in this work, we intentionally employed the full model to demonstrate the full capabilities and performance of the proposed $\beta$-SGP framework.


\section{Conclusion}
\label{sec:conclusion}

We introduced a collaborative navigation framework for heterogeneous robots operating in unknown environments under communication constraints. 
A sensor robot assists another robot in reaching a target by optimizing map transmission and navigation while predicting unexplored areas.
At the core of the framework is the proposed $\beta$-Sparse Gaussian Processes model, which performs task-aware inducing point selection. 
Simulations show that the framework reduces path cost by 18\% relative to no communication and decreases transmitted information by 76\% compared to baselines that transmit the sensor's raw data. 
Future work includes real-world validation, extension to multi-Sensor/multi-Actor settings, and improving the framework's efficiency.


\clearpage


\bibliography{references}  

\appendix

\clearpage
\section*{Appendix}

The appendix is organized as follows:

\begin{itemize}
    \item \textbf{Appendix~\ref{app:GP_all}} provides background information on Gaussian Processes (GPs), Sparse Gaussian Process Regression (SGPR), and Stochastic Variational Gaussian Processes (SVGPs).

    \item \textbf{Appendix~\ref{app:framework}} presents additional details of the proposed framework, along with the proofs of the main theoretical results.

    \item \textbf{Appendix~\ref{app:results}} describes the experimental and training specifications and includes additional experimental results.
\end{itemize}

\section{Background}\label{app:GP_all}

\subsection{Gaussian Processes (GPs)}\label{app:GP}
A Gaussian Process (GP)~\citep{Rasmussen2006} is a stochastic process in which any finite collection of random variables follows a multivariate Gaussian distribution \cite{Rasmussen2006}.  
GPs are widely used for regression to learn a mapping from an input space $\mathcal{X} \subset \mathbb{R}^D$ to a continuous output space $\mathcal{Y} \subset \mathbb{R}$, allowing predictions at unseen inputs.
A GP is specified by its mean function $\mu(\mathbf{x})$ and covariance function (kernel) $k(\mathbf{x}, \mathbf{x}')$, and is denoted as $\mathcal{GP}\big(\mu(\mathbf{x}), k(\mathbf{x}, \mathbf{x}')\big),$
where $\mathbf{x}, \mathbf{x}' \in \mathcal{X}$, and the kernel is parameterized by hyperparameters $\boldsymbol{\theta}$. 
We model an unknown latent function $f(\mathbf{x})$ as
\[
f(\mathbf{x}) \sim \mathcal{GP}\big(\mu(\mathbf{x}), k(\mathbf{x}, \mathbf{x}')\big).
\]

In the regression setting, we are given noisy observations of the latent function at inputs $\mathbf{x}_i \in \mathcal{X}$:
\[
y_i = f(\mathbf{x}_i) + \varepsilon_i, \quad i = 1, \dots, n,
\]
where (in our case) the observation noise is assumed to be Gaussian, $\varepsilon_i \sim \mathcal{N}(0, \sigma^2)$.

Let $\mathbf{X} = [\mathbf{x}_1^\top, \dots, \mathbf{x}_n^\top]^\top \in \mathbb{R}^{n \times D}$ denote the training inputs, $\mathbf{y} = [y_1, \dots, y_n]^\top$ the corresponding observations, and 
$\mathbf{f} = [f(\mathbf{x}_1), \dots, f(\mathbf{x}_n)]^\top$ the latent function values at the training points.
Our objective is to infer the latent function at a set of test inputs $\mathbf{X}_* = [\mathbf{x}_{1*}^\top, \dots, \mathbf{x}_{\ell*}^\top]^\top.$
We denote the corresponding predictions by $\mathbf{f}_*$.

The joint prior distribution over the observations and the test function values is given by
\begin{align} \label{eq:joint_prior}
\begin{bmatrix}
    \mathbf{y} \\
    \mathbf{f}_*
\end{bmatrix}
\sim 
\mathcal{N}
\left(
    \begin{bmatrix}
        \boldsymbol{\mu} \\
        \boldsymbol{\mu}_*
    \end{bmatrix}
    ,
    \begin{bmatrix}
        K + \sigma^2 I & K_*^\top \\
        K_* & K_{**}
    \end{bmatrix}
\right), 
\end{align}
where $K = K(\mathbf{X}, \mathbf{X})$, $K_* = K(\mathbf{X}_*, \mathbf{X})$, and $K_{**} = K(\mathbf{X}_*, \mathbf{X}_*)$.

By conditioning the joint prior distribution on the observations, we can derive the conditional distribution, which is also Gaussian
\begin{align} \label{eq:posterior}
\mathbf{f}_* \mid \mathbf{X}, \mathbf{y}, \mathbf{X}_* \sim 
\mathcal{N}\big(\bar{\mathbf{f}}_*, \Sigma_* \big),
\end{align}
where
\begin{align} \label{eq:GP_posterior}
    &\bar{\mathbf{f}}_* = \mathbb{E}[\mathbf{f}_* \mid \mathbf{X}, \mathbf{y}, \mathbf{X}_*] = \boldsymbol{\mu}_* +  K_*\big[K + \sigma^2 I\big]^{-1}(\mathbf{y}- \boldsymbol{\mu}), \nonumber \\
    &\Sigma_* 
    = K_{**} - K_*\big[K + \sigma^2 I\big]^{-1}K_*^\top.
\end{align}

Inference with the above posterior distribution requires knowledge of the hyperparameters $\boldsymbol{\theta}$ of the GP's covariance function $k(\mathbf{x}, \mathbf{x}')$.
Typically, these can be estimated by maximizing the log marginal likelihood (or evidence) given by 
\begin{align} \label{eq:LML}
\log p(\mathbf{y} \mid \mathbf{X}, \theta)
&= \log \mathcal{N}\big(\mathbf{y} \mid \mathbf{0}, \, K + \sigma^2 I\big) \nonumber \\
    &= -\frac{1}{2} (\mathbf{y}- \boldsymbol{\mu})^\top \big(K + \sigma^2 I\big)^{-1} (\mathbf{y}- \boldsymbol{\mu})
  - \frac{1}{2} \log \big|K + \sigma^2 I\big| - \frac{n}{2} \log (2\pi).
\end{align}

Inference using \eqref{eq:posterior} and hyperparameter optimization via \eqref{eq:LML} incur a computational complexity of $\mathcal{O}(n^3)$ and require $\mathcal{O}(n^2)$ memory, due to the matrix inversion.
This constitutes the main scalability bottleneck of GPs and naturally motivates the introduction of Sparse Gaussian Processes.

\subsection{Sparse Gaussian Processes Regression (SGPR)}\label{app:SGPR}

In this section, we will describe the Sparse Gaussian Process Regression (SGPR) method as outlined in~\citep{Titsias09} which requires $\mathcal{O}(nm^2)$ computation.
The method proposes a variational formulation for sparse approximation that determines the kernel hyperparameters $\boldsymbol{\theta}$ and a set of $m$ points to use for training, called inducing points, by maximizing a lower bound (ELBO) of the true log marginal likelihood.
The key idea is that the inducing points are treated as variational parameters and, typically $m \ll n$, thus reducing complexity.

Let $\mathbf{Z} = [\mathbf{z}_1^\top, \dots, \mathbf{z}_m^\top]^\top$ denote a set of inducing inputs, with corresponding function values $\mathbf{u} = f(\mathbf{Z}) = [f(\mathbf{z}_1), \dots, f(\mathbf{z}_m)]^\top$. 
Let the true posterior be $p(\mathbf{f}, \mathbf{u} \mid \mathbf{X}, \mathbf{y})$, which we approximate with a variational distribution with the same factorization $q(\mathbf{f}, \mathbf{u}) = p(\mathbf{f} \mid \mathbf{X},  \mathbf{u}) \, q(\mathbf{u})$.
For notational simplicity, we omit conditioning on hyperparameters $\boldsymbol{\theta}$ throughout this section.
The ELBO proposed in~\citep{Titsias09} is given by
\begin{align}\label{eq:elbo_titsias}
\mathcal{F}_1
= \E_{q(\mathbf{f}, \mathbf{u})}
\big[\log p(\mathbf{y} \mid \mathbf{f})\big]
- \mathrm{KL}\big(q(\mathbf{u}) \,\|\, p(\mathbf{u})\big),
\end{align}

In the special case where the likelihood \( p(\mathbf{y} \mid \mathbf{f}) \) is Gaussian, the ELBO becomes analytically tractable and is given by (assuming \( \mu(\mathbf{x}) = 0 \) for simplicity)
\begin{align}\label{eq:elbo_titsias2}
\mathcal{F}_1 = \log \mathcal{N}\big(\mathbf{y} \mid \mathbf{0}, \, Q + \sigma^2 I\big)
- \frac{1}{2\sigma^2} \operatorname{Tr}\big(K - Q\big),
\end{align}
where \( Q = K_{\mathbf{X}, \mathbf{Z}} K_{\mathbf{Z}, \mathbf{Z}}^{-1} K_{\mathbf{Z}, \mathbf{X}} \), $K_{\mathbf{X}, \mathbf{Z}} = K(\mathbf{X}, \mathbf{Z})$, and $K_{\mathbf{Z}, \mathbf{Z}} = K(\mathbf{Z}, \mathbf{Z})$.
In this case, the variational distribution is given by
\begin{align}
q^*(\mathbf{u})
&= \mathcal{N}\big(\mathbf{u} \mid \boldsymbol{m}, S\big),
\end{align}
where
$\boldsymbol{m} = \sigma^{-2} K_{\mathbf{Z}, \mathbf{Z}} A K_{\mathbf{Z}, \mathbf{X}} \mathbf{y}$, 
$S = K_{\mathbf{Z}, \mathbf{Z}} A K_{\mathbf{Z}, \mathbf{Z}}$,
and $A = \big(K_{\mathbf{Z}, \mathbf{Z}} + \sigma^{-2} K_{\mathbf{Z}, \mathbf{X}} K_{\mathbf{X}, \mathbf{Z}} \big)^{-1}$.

Note that the optimized variational parameters are required for inference, as predictions are obtained using the variational distribution over the inducing variables
\begin{align} \label{eq:SGPR_posterior}
\bar{\mathbf{f}}_{*}^q
&= K_{* \mathbf{Z}} K_{\mathbf{Z}, \mathbf{Z}}^{-1} \boldsymbol{m}, \nonumber \\
\Sigma_{*}^q
&= K_{**}
   - K_{* \mathbf{Z}} K_{\mathbf{Z}, \mathbf{Z}}^{-1} K_{\mathbf{Z} *} + K_{* \mathbf{Z}} K_{\mathbf{Z}, \mathbf{Z}}^{-1} S K_{\mathbf{Z}, \mathbf{Z}}^{-1} K_{ \mathbf{Z} *},
\end{align}
where $K_{* \mathbf{Z}} = K(X_*, \mathbf{Z})$.

\subsection{Stochastic Variational Gaussian Processes (SVGPs)}\label{app:SVGP}

In~\citep{Hensman2013BigData}, the authors propose an alternative approximation to~\eqref{eq:elbo_titsias} by decomposing the expected log-likelihood term as
\begin{align}
\mathbb{E}_{q(\mathbf{f}, \mathbf{u})}[\log p(\mathbf{y} \mid \mathbf{f})] = 
\mathbb{E}_{q(\mathbf{f}, \mathbf{u})}\left[\sum_{i=1}^{n} \log p(y_i \mid f(x_i))\right] \approx \sum_{i=1}^{n} \mathbb{E}_{q(f_i)}\big[\log p(y_i \mid f(x_i))\big],
\end{align}
where the variational distribution parameters $\boldsymbol{m}$ and $S$ are treated as free variables and are jointly optimized with the kernel hyperparameters using gradient-based methods. 
This stochastic variational formulation reduces the computational complexity to $\mathcal{O}(m^3)$, independent of the number of data points $n$.

\section{Framework Details}\label{app:framework}

\subsection{Remaining Components: Path Planner and Truncator}\label{app:components}

\textbf{Truncator:}
The Truncator computes the truncated map estimate \(\widehat{\mathbf{f}}_{t}\) using the Predictor's outputs, namely the GP mean and variance for each cell, denoted by \(\bar{f}_*(\cdot)\) and \(\sigma_*(\cdot)\), respectively. 
Specifically, values of \(\bar{f}_*\) exceeding \(1\) are clipped to \(1\), while values below \(0\) are clipped to \(0\). 
In addition, cells with uncertainty above a threshold \(\sigma_{\mathrm{th}}\) are assigned the initial belief value \(\hat{y}_0\).

\textbf{Path Planner~\citep{Psomiadis2025}:}
Let $\mathcal{G}$ denote the graph associated with the discretized map $\mathcal{X}$, where each node corresponds to a grid cell and edges connect neighboring cells according to the robot action set \(V\). 
The traversal cost of a cell is defined as
\begin{equation}\label{eq:cell_cost}
    c_{t}(\mathbf{x}_i)=     
    \begin{cases}
      \widehat{f}_{t}^{(A)}(\mathbf{x}_i) + a, & \text{if } \mathbf{x}_i \in \mathcal{X}_{\epsilon},\\
      N(\epsilon + a), & \text{otherwise},
    \end{cases}
\end{equation}
where \(a\) is a constant penalty, and
$\mathcal{X}_{\epsilon} =
\left\{
\mathbf{x}_i \in \mathcal{X} :
\widehat{f}_{t}^{(A)}(\mathbf{x}_i) \leq \epsilon
\right\}$
denotes the set of feasible cells. 
The threshold \(\epsilon\) satisfies
$\hat{y}_0 < \epsilon \leq 1,
\quad \forall \mathbf{x}_i \in \mathcal{X}$,
where \(\hat{y}_0\) is the initial belief value. 
Choosing \(\epsilon\) greater than the initial belief biases the Actor to explore unknown regions rather than traversing cells believed to contain obstacles.

The Actor computes the minimum-cost path using a graph search algorithm (e.g., Dijkstra's algorithm \cite{Dijkstra1959})
\begin{equation}\label{eq:path}
    \pi_t^* =
    \argmin_{\pi \in \Pi_t}
    \sum_{\mathbf{x}_i \in \pi}
    c_t(\mathbf{x}_i),
\end{equation}
where \(\Pi_t\) denotes the set of all paths from the Actor's current position \(\mathbf{x}_t^A\) to the goal location \(\mathbf{x}_G^A\).
Typically, the Actor executes the first cell of the computed path at each timestep as the next action. 
However, when deadlocks leading to oscillatory behavior are detected, the Actor temporarily freezes the planned path and executes multiple consecutive actions before replanning.

\subsection{Sensor Algorithm}\label{app:algorithm}

Let the Sensor state at time $t$ be $\mathcal{S}_{t} = \{ \mathbf{x}_{t}^{S}, F_{t}, \boldsymbol{\sigma}^{*S}_{t}, {p}_{t}(\cdot), \mathcal{A}_{t}, ( (\mathbf{Z}^*, \mathbf{y} (\mathbf{Z}^*))_{t}^{S}, (\mathbf{X}, \mathbf{y} (\mathbf{X}))_{0:t}^{S} \}$.
Algorithm~\ref{alg:sensor} summarizes the Sensor's procedure.
The \textsc{UpdateFrontiers} function updates the frontier set based on the Sensor's position.
The \textsc{Perception} function acquires observations from the environment at the current Sensor location.
Finally, the \textsc{Overlap} function computes the set $\mathcal{A}_t$, which contains points that have either been previously transmitted by the Sensor or independently observed by the Actor.

\begin{algorithm}[H]
\caption{Sensor Algorithm}
\label{alg:sensor}

    \textbf{Input:}
    Sensor state $\mathcal{S}_{t-1}$,
    Actor's path $\pi_t^*$
    
    \textbf{Output:}
    Updated Sensor state $\mathcal{S}_{t}$
    
    \begin{algorithmic}[1]
    
    \State $F_t \gets \textsc{UpdateFrontiers}(F_{t-1}, \mathbf{x}_{t-1}^S)$
    
    \State $\mathbf{x}_t^S \gets
    \textsc{Action}(
    \mathbf{x}_{t-1}^S,
    p_{t-1}(\cdot),
    \boldsymbol{\sigma}_{t-1}^{*S},
    F_t)$ using~\eqref{eq:gp_ucb}
    
    \State $(\mathbf{X},\mathbf{y}(\mathbf{X}))_t^S
    \gets
    \textsc{Perception}(\mathbf{x}_t^S)$
    
    \State $p_t(\cdot)
    \gets
    \textsc{RoIGenerator}(\pi_t^*)$ using~\eqref{eq:p_Z_simple} or~\eqref{eq:p_Z_path}
    
    \State $\mathcal{A}_t
    \gets
    \textsc{Overlap}(
    \mathcal{A}_{t-1},
    \mathbf{Z}_{t-1}^{*S},
    \mathbf{x}_t^A)$
    
    \State $(\mathbf{Z}^*, \mathbf{y}(\mathbf{Z}^*))_t^S
    \gets
    \textsc{$\beta$-SGP}(
    (\mathbf{X},\mathbf{y}(\mathbf{X}))_{0:t}^S,
    \mathcal{A}_t,
    p_t(\cdot))$ optimizing~\eqref{eq:betaSGP-elbo}
    
    \State $(\bar{\mathbf{f}}_*, \boldsymbol{\sigma}_t^{*S})
    \gets
    \textsc{Predictor}(
    (\mathbf{Z}^*,\mathbf{y}(\mathbf{Z}^*))_t^S,
    \mathcal{A}_t)$ using~\eqref{eq:SGPR_posterior}
    
    \State Update $\mathcal{S}_t$
    
    \State \Return $\mathcal{S}_t$

\end{algorithmic}
\end{algorithm}

\subsection{Proof of Theorem~\ref{thm:betaSGP_ELBO}}\label{app:ThmbetaSGP_ELBO}
\begin{proof}
Consider the KL divergence between the variational distribution $q(\mathbf{f}, \mathbf{u}, \mathbf{Z})$ and the true posterior $p(\mathbf{f}, \mathbf{u}, \mathbf{Z} \mid \mathbf{y}, \mathbf{X})$
\begin{align}
&\mathrm{KL}\big(q(\mathbf{f}, \mathbf{u}, \mathbf{Z}) \,\|\, p(\mathbf{f}, \mathbf{u}, \mathbf{Z} \mid  \mathbf{y}, \mathbf{X}) \big) = 
\int q(\mathbf{f}, \mathbf{u}, \mathbf{Z})
    \log \frac{q(\mathbf{f}, \mathbf{u}, \mathbf{Z})}
              {p(\mathbf{f}, \mathbf{u}, \mathbf{Z} \mid  \mathbf{y}, \mathbf{X})}
    \, d\mathbf{f} \, d\mathbf{u} \, d\mathbf{Z}  \nonumber \\
    &=  \int q(\mathbf{f}, \mathbf{u}, \mathbf{Z}) 
    \big( - \log p(\mathbf{y} \mid \mathbf{f}, \mathbf{u}, \mathbf{Z}, \mathbf{X} ) + 
    \log \frac{q(\mathbf{f}, \mathbf{u}, \mathbf{Z})}{p(\mathbf{f}, \mathbf{u}, \mathbf{Z} \mid \mathbf{X})} + \log p(\mathbf{y} \mid \mathbf{X}) \big)
    \, d\mathbf{f} \, d\mathbf{u} \, d\mathbf{Z}  \nonumber \\
    &= \log p(\mathbf{y} \mid \mathbf{X}) +   \int q(\mathbf{f}, \mathbf{u}, \mathbf{Z}) 
    \big( - \log p(\mathbf{y} \mid \mathbf{f}) + 
    \log \frac{ q(\mathbf{u} \mid \mathbf{Z}) \, q(\mathbf{Z})}{ p(\mathbf{u} \mid \mathbf{Z}) \, p(\mathbf{Z})} \big)
    \, d\mathbf{f} \, d\mathbf{u} \, d\mathbf{Z},
\end{align}
where the second equality follows from Bayes’ rule, and the third uses the fact that 
$\mathbf{f}$ is a sufficient statistic for $\mathbf{y}$ together with the assumed factorization of $q$.

Rearranging terms yields:
\begin{align} \label{eq:mll}
\log p(\mathbf{y} \mid \mathbf{X}) = \mathrm{KL}\big(q(\mathbf{f}, \mathbf{u}, \mathbf{Z}) \,\|\, p(\mathbf{f}, \mathbf{u}, \mathbf{Z} \mid  \mathbf{y}, \mathbf{X}) \big) + \E_{q(\mathbf{Z})} \big[ \mathcal{F}_1 \big] - 
    \mathrm{KL}\big(q(\mathbf{Z}) \,\|\, p(\mathbf{Z}) \big),
\end{align}
where $\mathcal{F}_1 = \E_{q(\mathbf{f}, \mathbf{u} \mid \mathbf{Z})} [\log p(\mathbf{y} \mid \mathbf{f})] - \mathrm{KL}\big(q(\mathbf{u} \mid \mathbf{Z}) \,\|\, p(\mathbf{u} \mid \mathbf{Z}))$.

Plugging~\eqref{eq:mll} into~\eqref{eq:betaSGP-objective}, and using the fact that the KL divergence is always non-negative together with $\tilde{\beta}, \alpha \ge 0$, we obtain the proposed ELBO.
\end{proof}

\subsection{Proof of Lemma~\ref{lemma:KL_ELBO}}\label{app:KL_ELBO}

\begin{proof}
Let the prior distribution $p(\mathbf{Z})$ be defined as in~\eqref{eq:p_Z}, where each component follows an independent Gaussian $p(\mathbf{z}) = \mathcal{N}\bigl(\mathbf{z} \mid \boldsymbol{\mu}_p, \Sigma_p \bigr),$
as described in Section~\ref{sec:RoI}. 
Furthermore, let the variational distribution $q_{\lambda}(\mathbf{Z})$ be given in~\eqref{eq:q_Z}.
Then, the KL divergence between $q_{\lambda}(\mathbf{Z})$ and $p(\mathbf{Z})$ can be written as
\begin{equation}
\mathrm{KL}\big[q_{\lambda}(\mathbf{Z}) \,\|\, p(\mathbf{Z})\big]
= \mathrm{CE}(\lambda) - H(\lambda),
\end{equation}
where the entropy and the cross-entropy terms are
\begin{align}
H(\lambda)
&= - \sum_{i=1}^{m} \Big( \lambda_i \log \lambda_i + (1 - \lambda_i)\log(1 - \lambda_i) \Big), \nonumber \\
\mathrm{CE}(\lambda) &= -\mathbb{E}_{q_{\lambda}(\mathbf{Z})}\big[\log p(\mathbf{Z})\big] 
= -\sum_{i=1}^{m} \lambda_i \log \mathcal{N}\bigl(\mathbf{z}_i \mid \boldsymbol{\mu}_p, \Sigma_p \bigr) \nonumber \\
&= \sum_{i=1}^{m} \lambda_i \left[
\frac{d}{2}\log(2\pi)
+\frac{1}{2}\log|\Sigma_p|
+\frac{1}{2}(\mathbf{z}_i - \boldsymbol{\mu}_p)^\top \Sigma_p^{-1}(\mathbf{z}_i - \boldsymbol{\mu}_p)
\right],
\end{align}
where $d$ denotes the dimensionality of each $\mathbf{z}_i$.
\end{proof}

\section{Specifications and Additional Results}\label{app:results}

\subsection{Computer and Training Specifications}\label{app:specs}

All training epochs were selected based on the observed convergence behavior of the models. 
For the online training of the Sensor using the SGP criterion in \eqref{eq:SGP-elbo}, 200 epochs were used, while the \(\beta\)-SGP criterion in \eqref{eq:betaSGP-elbo} required an additional 100 epochs. 
For the online training of the Actor, the model was initially trained for 200 epochs and subsequently retrained at every timestep for 50 additional epochs using the saved model parameters to reduce computational cost.
For the Sensor's Predictor optimizing \eqref{eq:SGP-elbo}, we used the Adam optimizer~\cite{kingma2015adam} with a learning rate of 0.02.
For the score function estimator, we again used the Adam optimizer with a learning rate of 0.3 and 16 Monte Carlo samples.
For the Actor's Predictor, we used the Adam optimizer with a learning rate of 0.05, combined with a cosine annealing learning rate scheduler.

A fixed random seed was used to ensure reproducibility of the results. 
All experiments were conducted on a computer equipped with an 11th Gen Intel Core i7-11800H CPU (2.3 GHz, 8 cores) and 16 GB of RAM. 
The models were implemented using PyTorch v.2.5.1 \cite{paszke2019pytorch}.

\begin{table}[H]
\centering
\caption{Robot Specifications.}
\label{tab:spec_sim2}
\begin{tabular}{c|c}
\hline
Actor's local map & $5 \times 5$ grid \\
\hline
Sensor's local map & $7 \times 7$ grid \\
\hline
Robots' action set $V$ & \{UP, DOWN, LEFT, RIGHT\} \\
\hline
Actor's start $\mathbf{x}_0^{A}$ & $(12, 33)$ \\
\hline
Actor's goal $\mathbf{x}_G^{A}$ & $(43, 25)$ \\
\hline
Sensor's horizon $T^S$ & $70$ \\
\hline
Initial belief value \(\hat{y}_0\) & 0.5 \\
\hline
Path planner threshold \(\epsilon\) & 0.501 \\
\hline
Path planner penalty \(a\) & 0.1 \\
\hline
Actor's perception noise \(\sigma^{A}\) & 0.01 \\
\hline
Sensor's perception noise \(\sigma^{S}\) & 0.05 \\
\hline
Truncator's threshold \(\sigma_{\mathrm{th}}\) & 0.14 \\
\hline
RoI uncertainty $\tilde{\sigma}$ & $\frac{\|\mathbf{x}_0^S - \mathbf{x}_G^{S}\|}{3}$ \\
\hline
Number of Sensor transmitted points $m_t^*$ & $\{2,1,2,1,2, \dots\}$ \\
\hline
$\gamma$ & $0.05\frac{\max_{\mathbf{x}\in\mathbf{R}_t}
p_t(\mathbf{x})}
{\max_{\mathbf{x}\in\mathbf{R}_t}
\sigma_{*t-1}(\mathbf{x})
}$ \\
\hline
\end{tabular}
\end{table}

\subsection{Qualitative Results for Ablation Study}\label{app:ablation_study}

\begin{figure}[h]
\centering

\setlength{\tabcolsep}{2pt}

\begin{tabular}{c c c c c}

& $m^{*}=30$ & $m^{*}=60$ & $m^{*}=300$ & $m^{*}=600$ \\

\raisebox{0.07\linewidth}{$\beta=1$}
&
\includegraphics[width=0.18\linewidth, trim=130 45 110 50, clip]{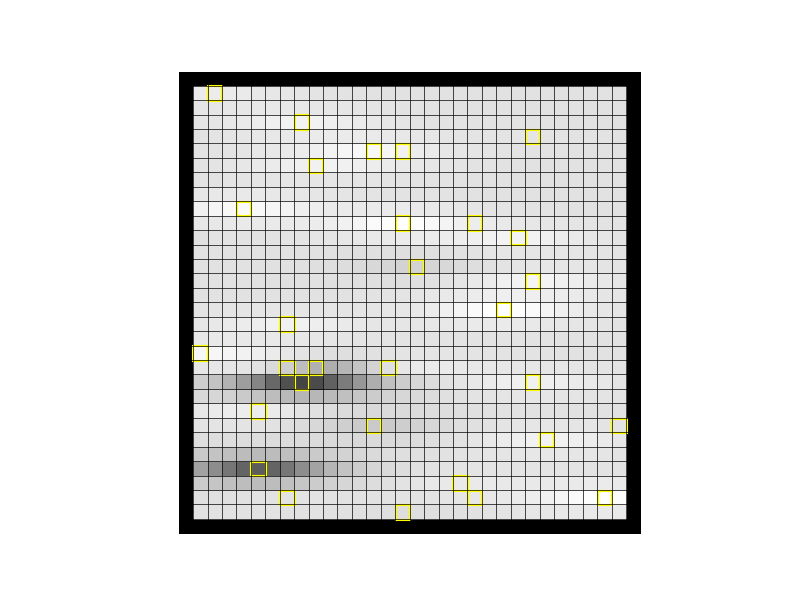}
&
\includegraphics[width=0.18\linewidth, trim=130 45 110 50, clip]{./figs/sim1/beta1_pts60.png}
&
\includegraphics[width=0.18\linewidth, trim=130 45 110 50, clip]{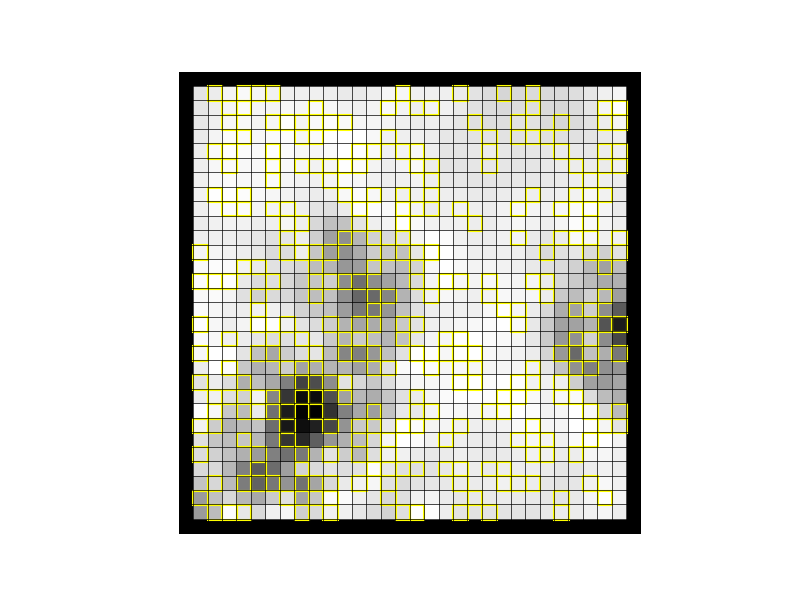}
&
\includegraphics[width=0.18\linewidth, trim=130 45 110 50, clip]{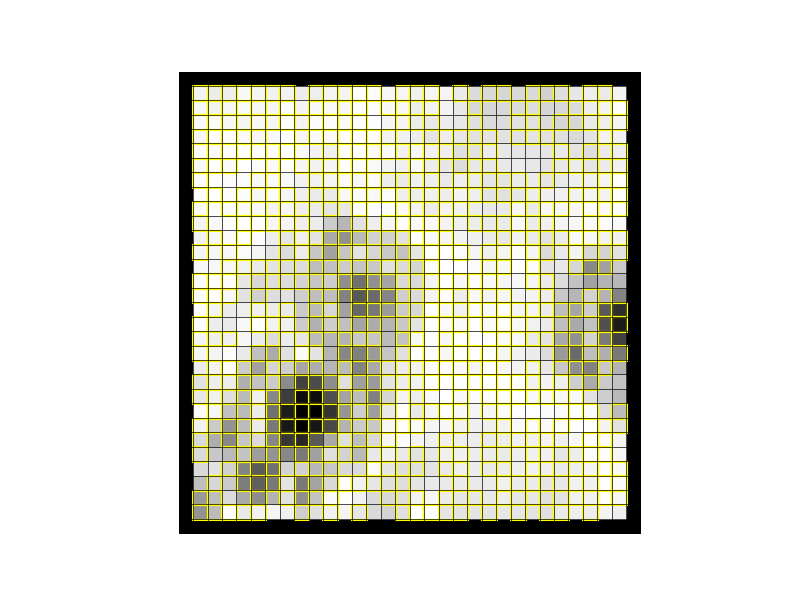}
\\

\raisebox{0.07\linewidth}{$\beta=10$}
&
\includegraphics[width=0.18\linewidth, trim=130 45 110 50, clip]{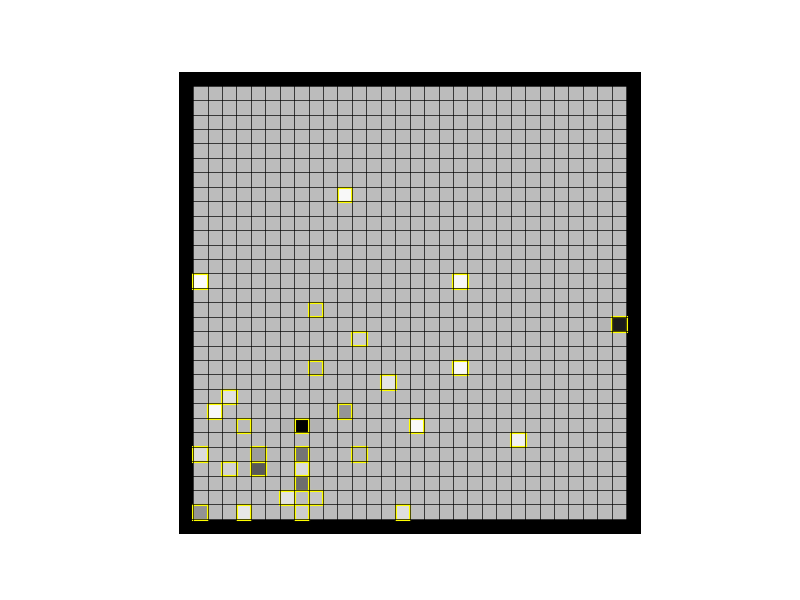}
&
\includegraphics[width=0.18\linewidth, trim=130 45 110 50, clip]{./figs/sim1/beta10_pts60.png}
&
\includegraphics[width=0.18\linewidth, trim=130 45 110 50, clip]{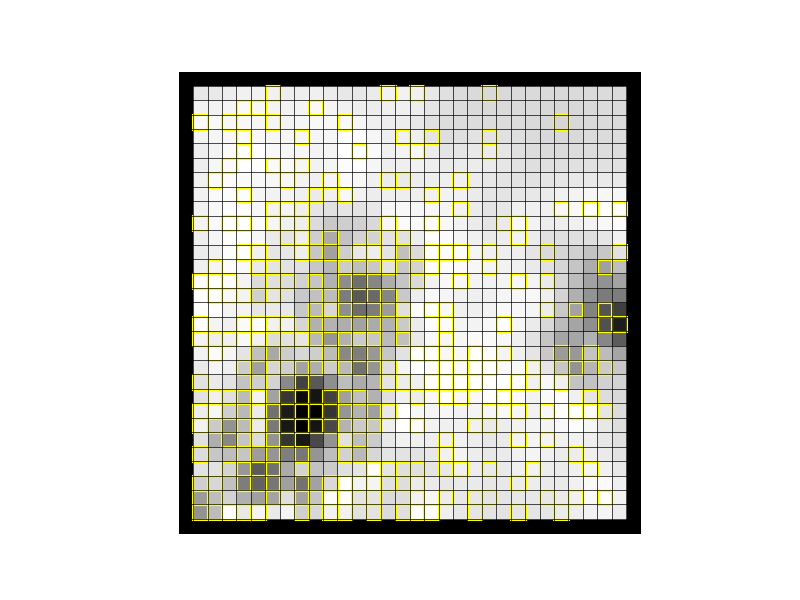}
&
\includegraphics[width=0.18\linewidth, trim=130 45 110 50, clip]{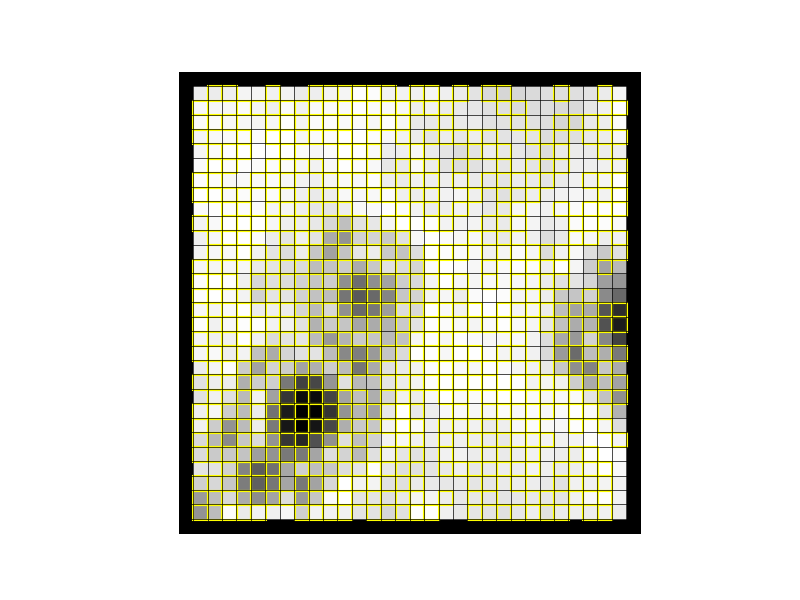}
\\

\raisebox{0.07\linewidth}{$\beta=50$}
&
\includegraphics[width=0.18\linewidth, trim=130 45 110 50, clip]{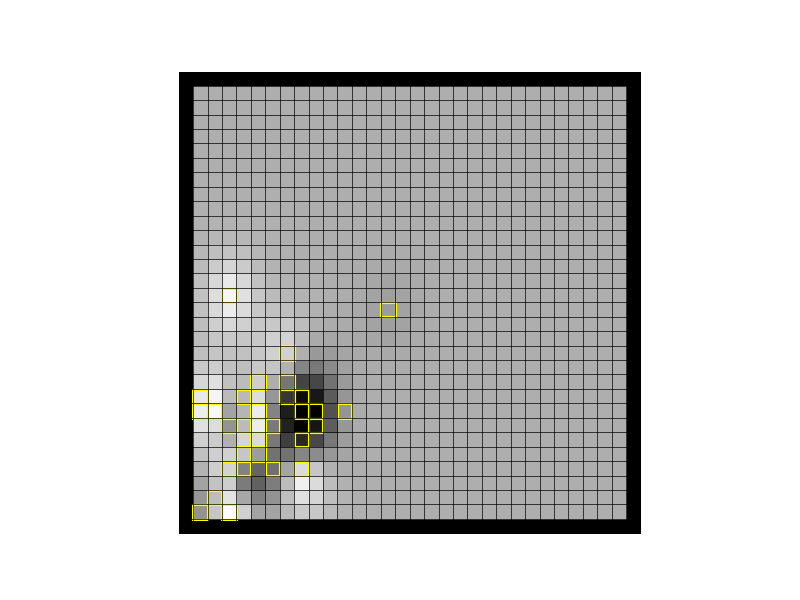}
&
\includegraphics[width=0.18\linewidth, trim=130 45 110 50, clip]{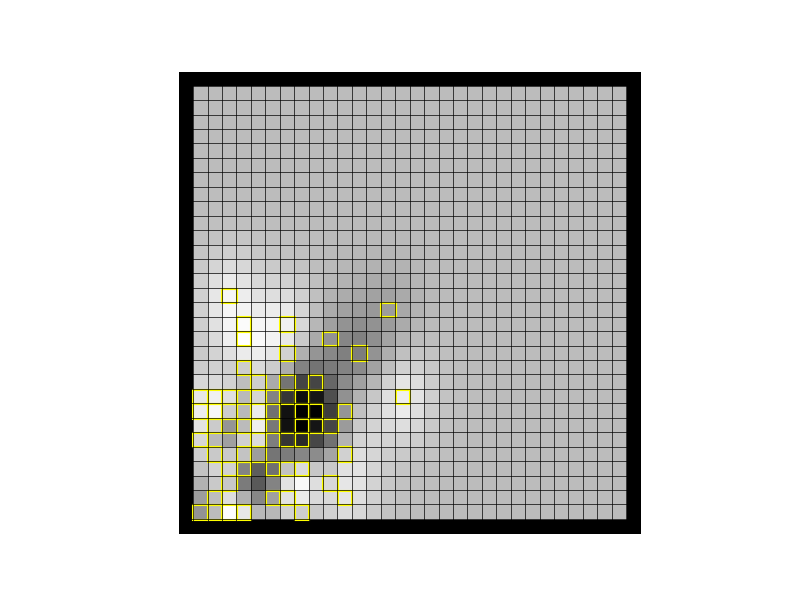}
&
\includegraphics[width=0.18\linewidth, trim=130 45 110 50, clip]{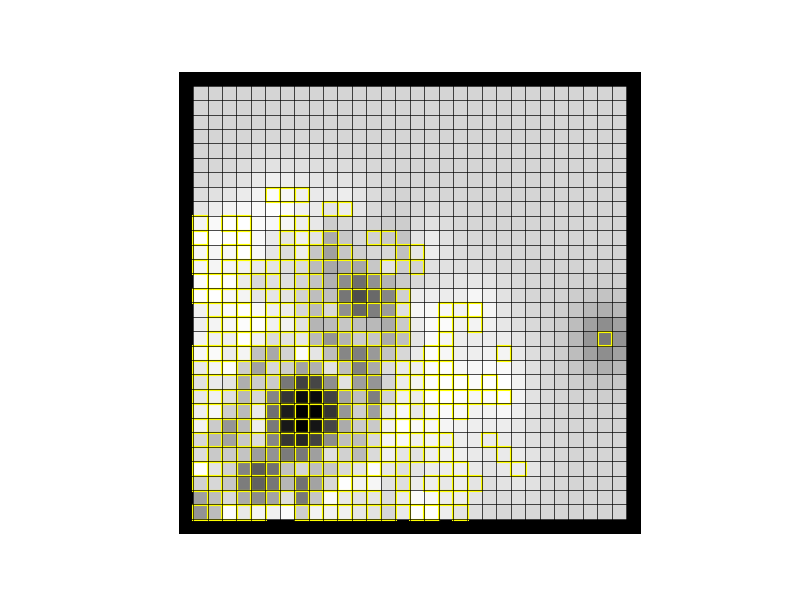}
&
\includegraphics[width=0.18\linewidth, trim=130 45 110 50, clip]{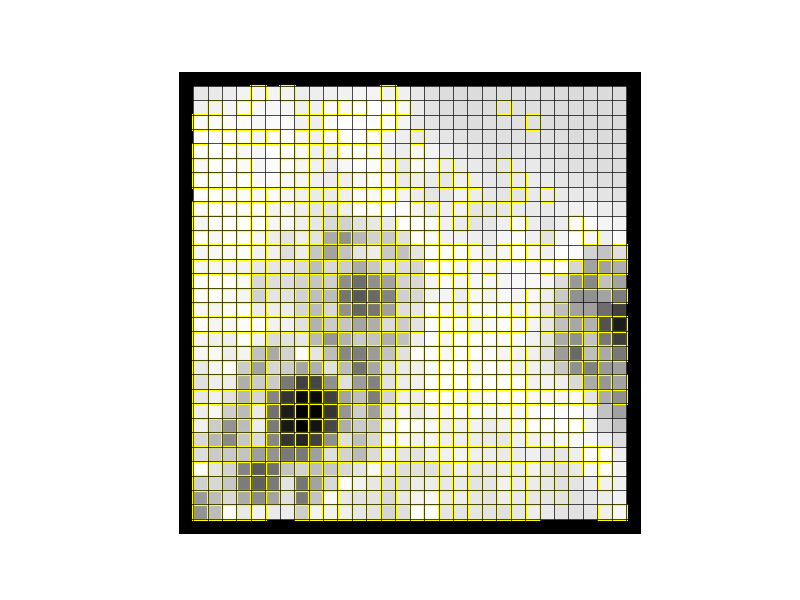}
\\

\raisebox{0.07\linewidth}{$\beta=100$}
&
\includegraphics[width=0.18\linewidth, trim=130 45 110 50, clip]{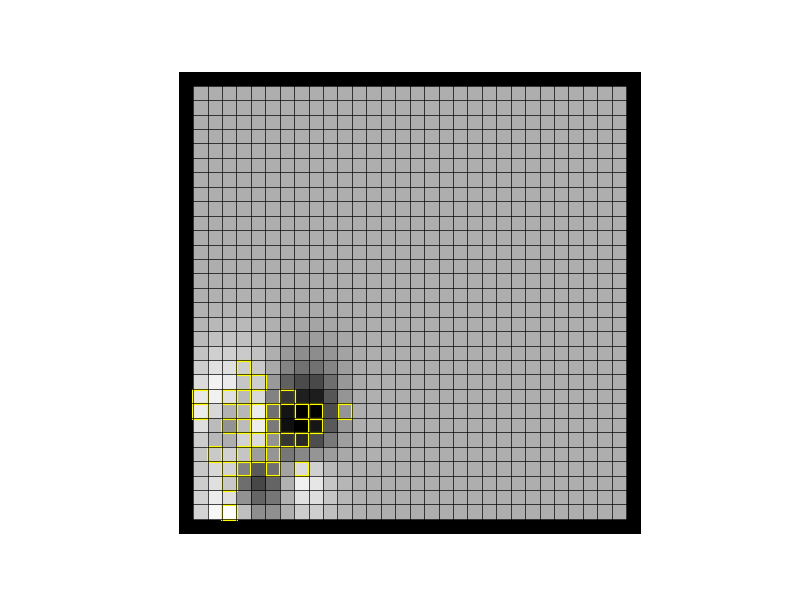}
&
\includegraphics[width=0.18\linewidth, trim=130 45 110 50, clip]{./figs/sim1/beta100_pts60.png}
&
\includegraphics[width=0.18\linewidth, trim=130 45 110 50, clip]{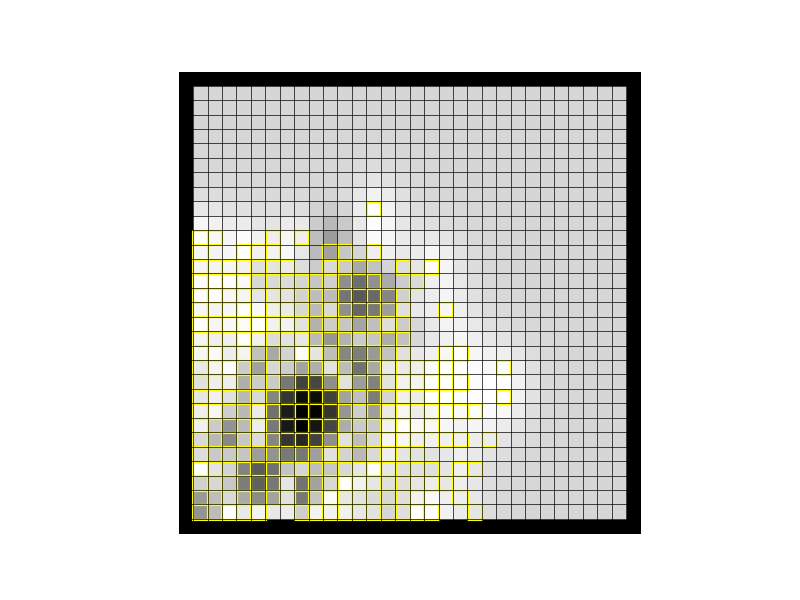}
&
\includegraphics[width=0.18\linewidth, trim=130 45 110 50, clip]{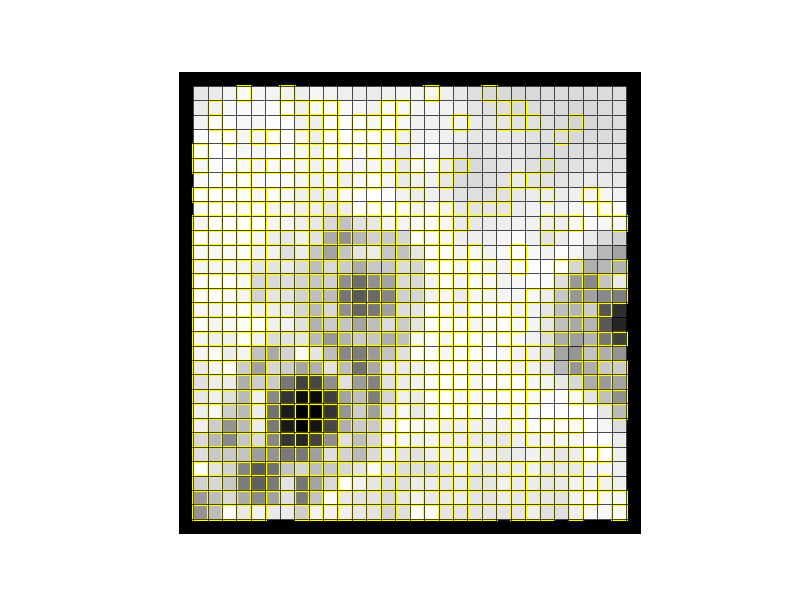}
\\

\end{tabular}

\caption{Predicted maps using $\beta$-SGP for different values of $\beta$ and number of transmitted points $m^{*}$.
Columns correspond to increasing communication budget $m^{*}$, while rows correspond to different values of $\beta$.
Yellow cells indicate transmitted points.
}
\label{fig:mars_sim_extended}

\end{figure}

\clearpage
\subsection{Qualitative Results for Comparative Analysis}\label{app:comparative_analysis}

\begin{figure}[h]
\centering

\setlength{\tabcolsep}{2pt}

\begin{tabular}{c c c c}

& $t=30$ & $t=70$ & $t_\textrm{final}$ \\

\raisebox{0.12\linewidth}{$U$}
&
\includegraphics[width=0.23\linewidth]{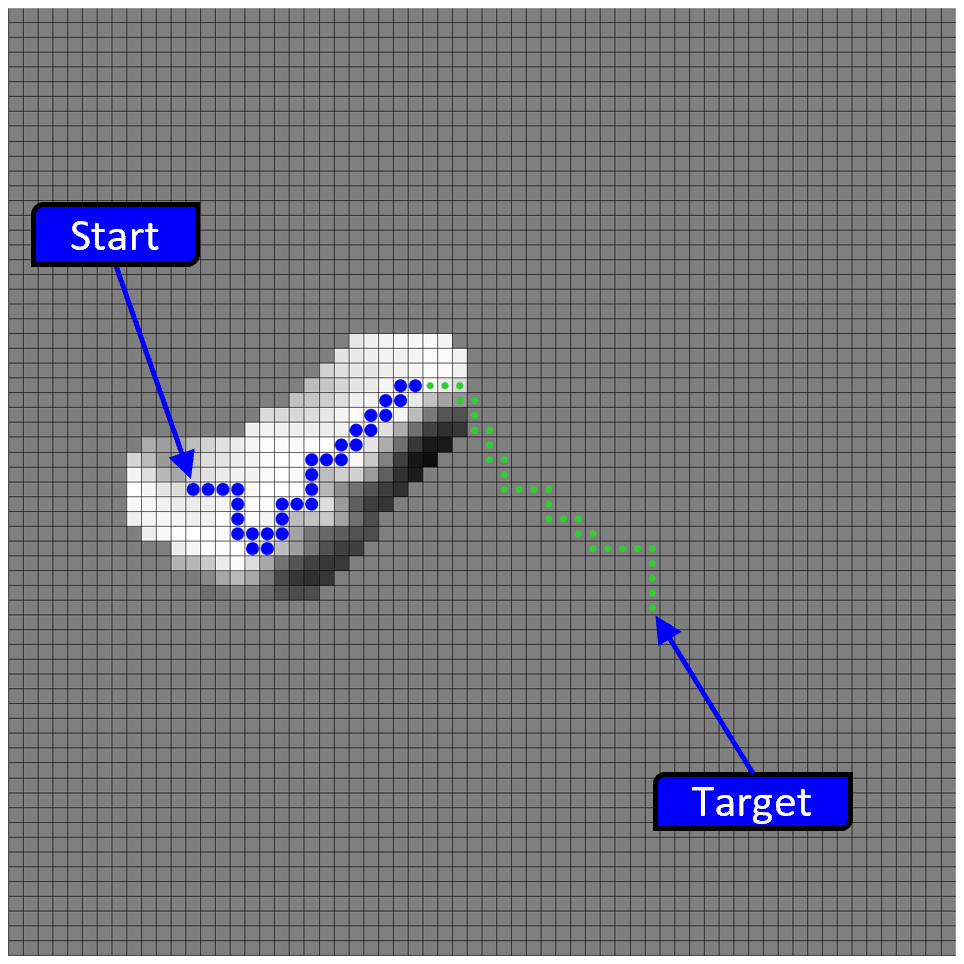}
&
\includegraphics[width=0.23\linewidth]{./figs/sim2/U_t70.png}
&
\includegraphics[width=0.23\linewidth]{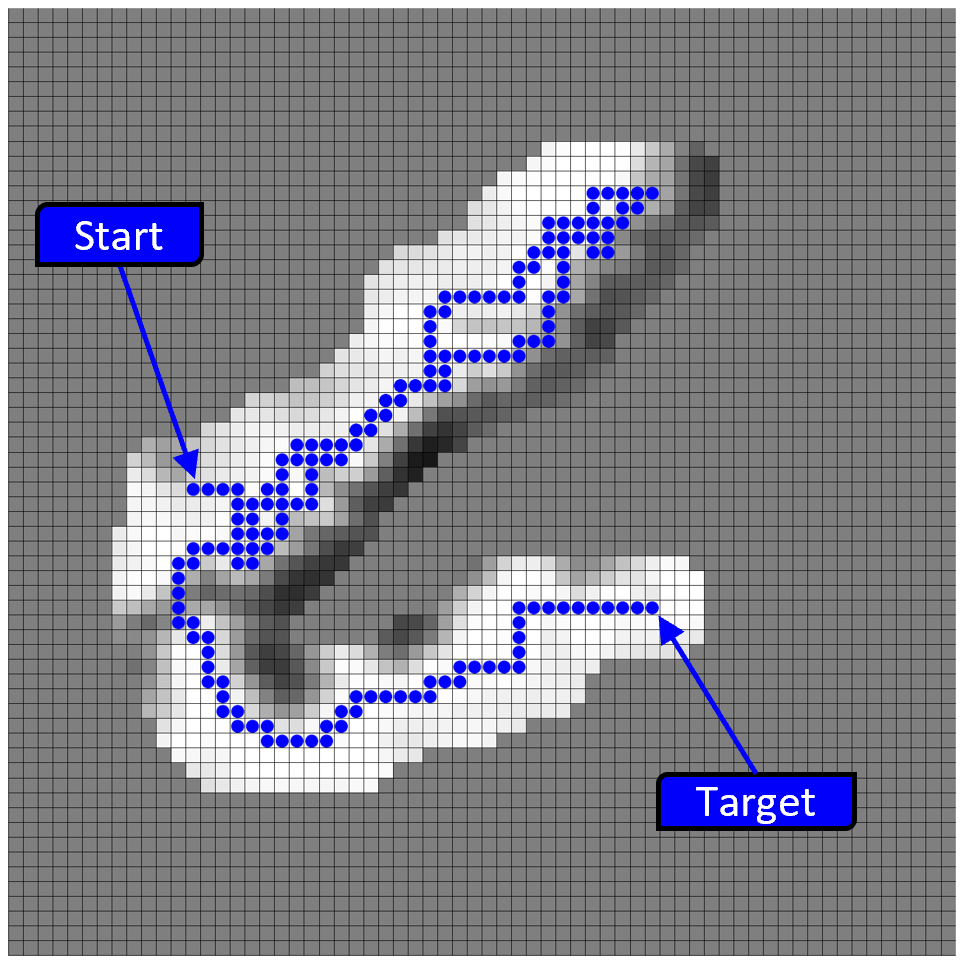}
\\

\raisebox{0.12\linewidth}{FI}
&
\includegraphics[width=0.23\linewidth]{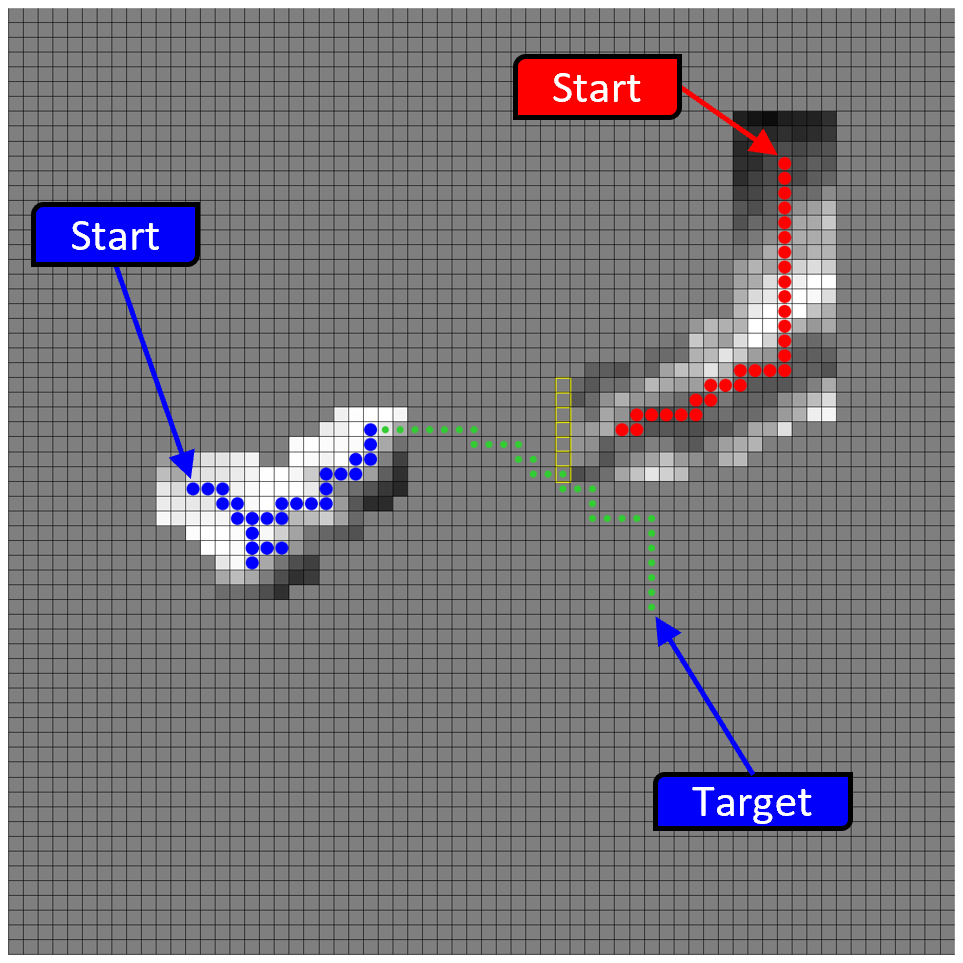}
&
\includegraphics[width=0.23\linewidth]{./figs/sim2/FI_t70.png}
&
\includegraphics[width=0.23\linewidth]{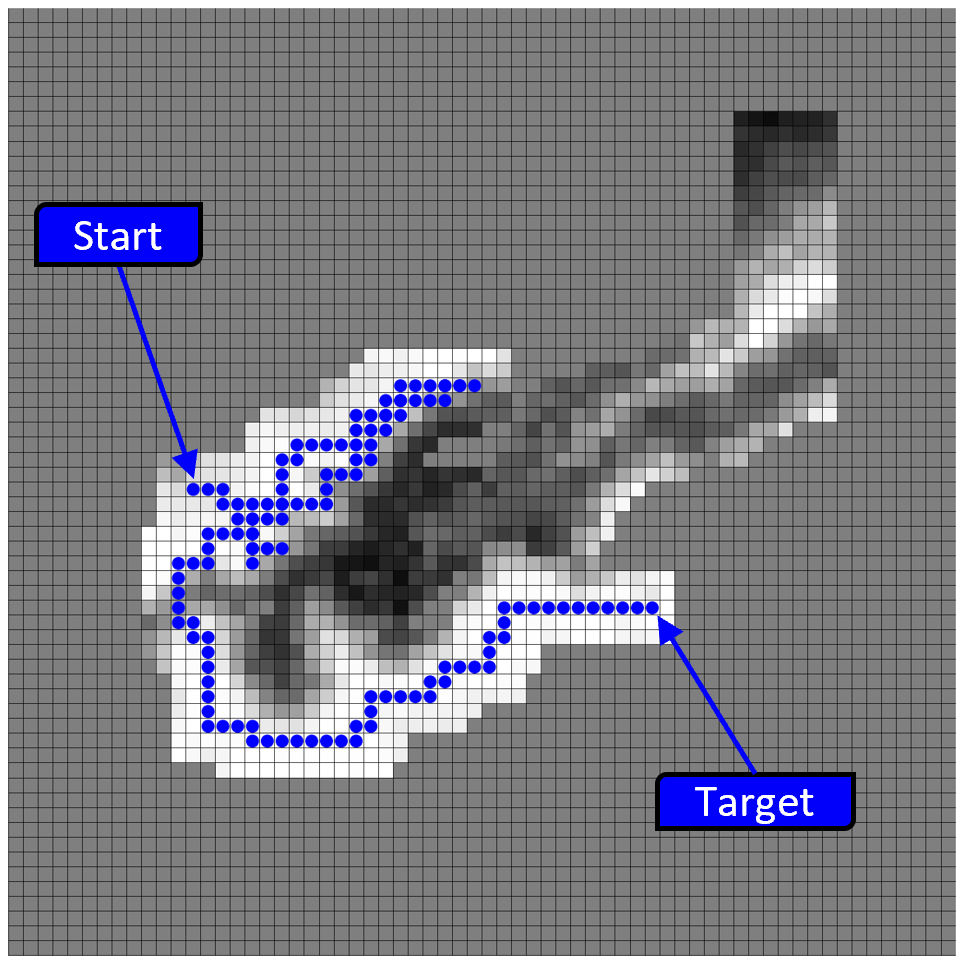}
\\

\raisebox{0.12\linewidth}{\shortstack{$\beta$-SGP\\($\beta=1$)}}
&
\includegraphics[width=0.23\linewidth]{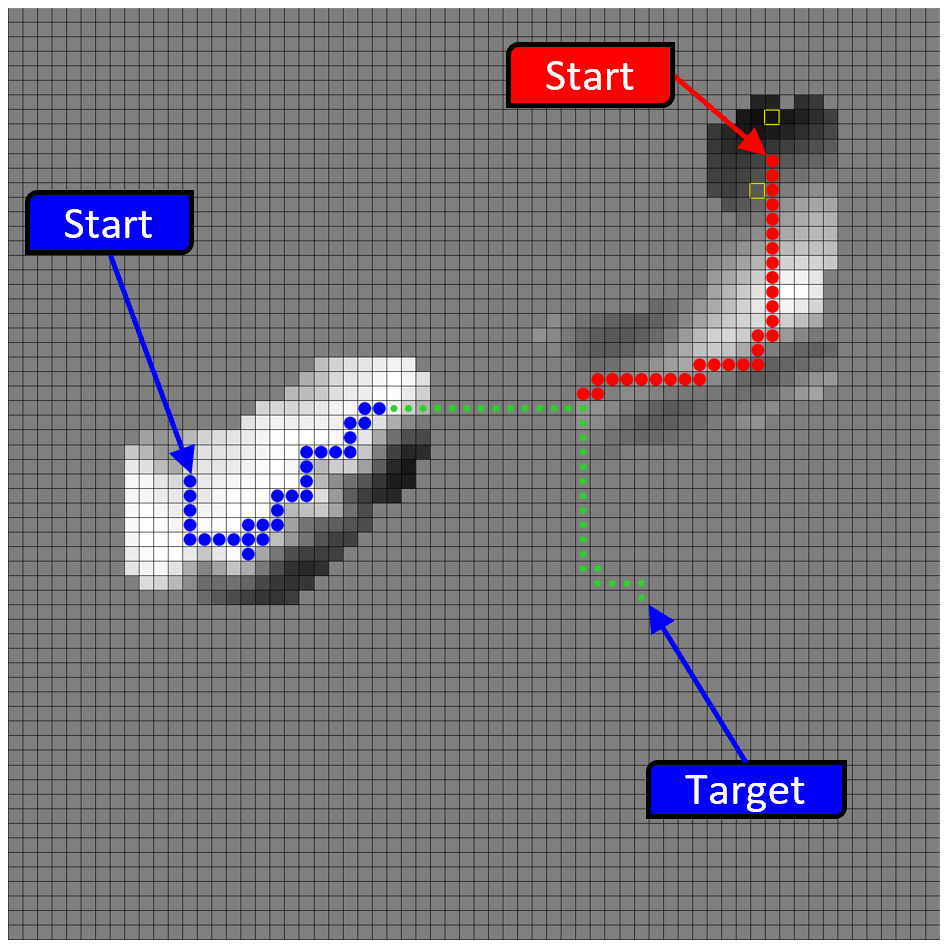}
&
\includegraphics[width=0.23\linewidth]{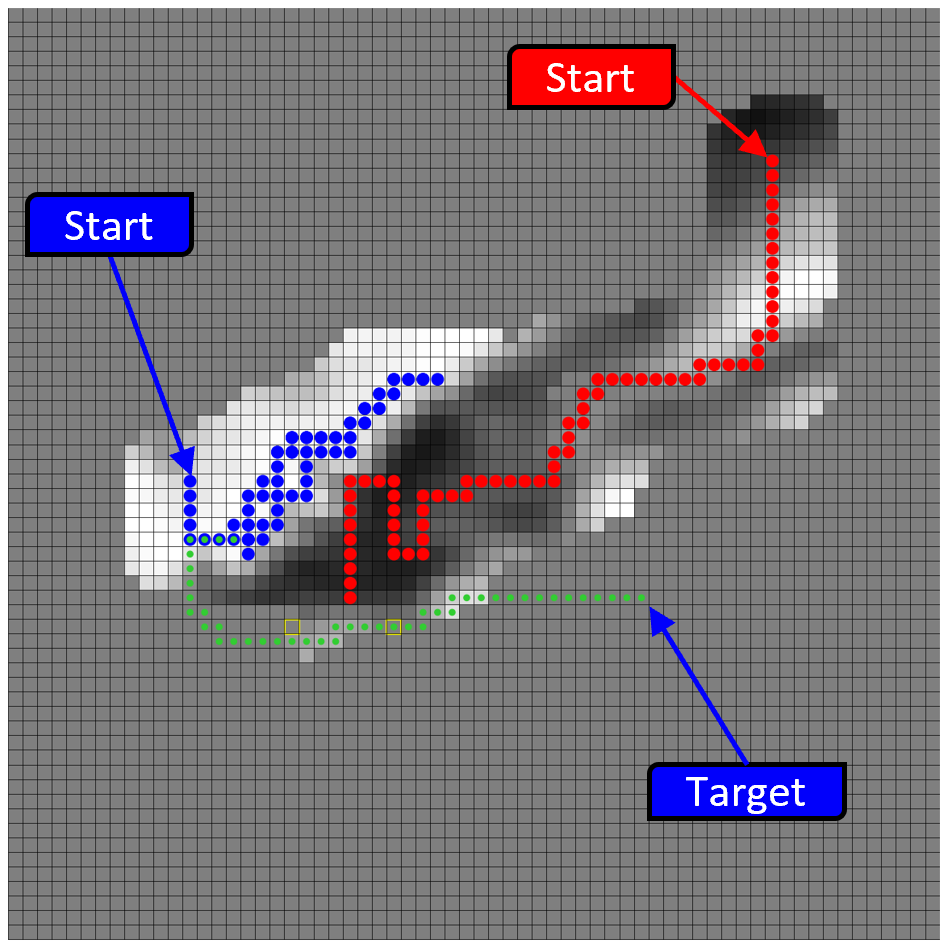}
&
\includegraphics[width=0.23\linewidth]{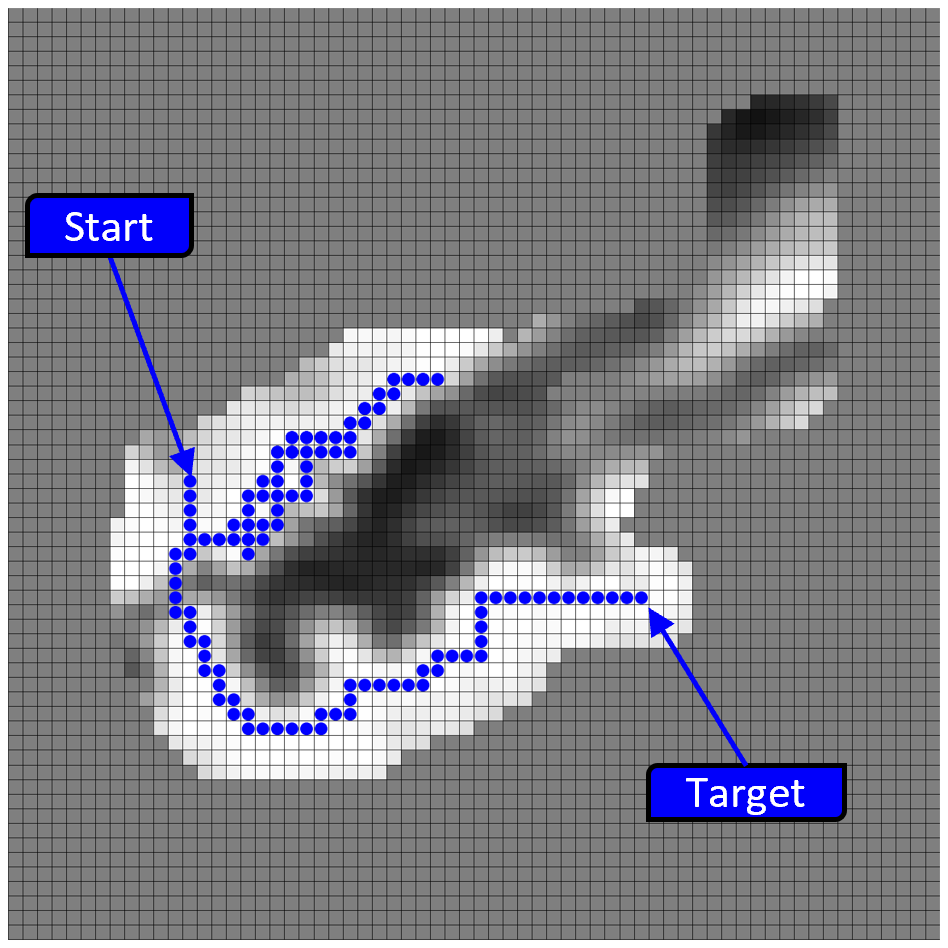}
\\

\raisebox{0.12\linewidth}{\shortstack{$\beta$-SGP\\($\beta=10$)}}
&
\includegraphics[width=0.23\linewidth]{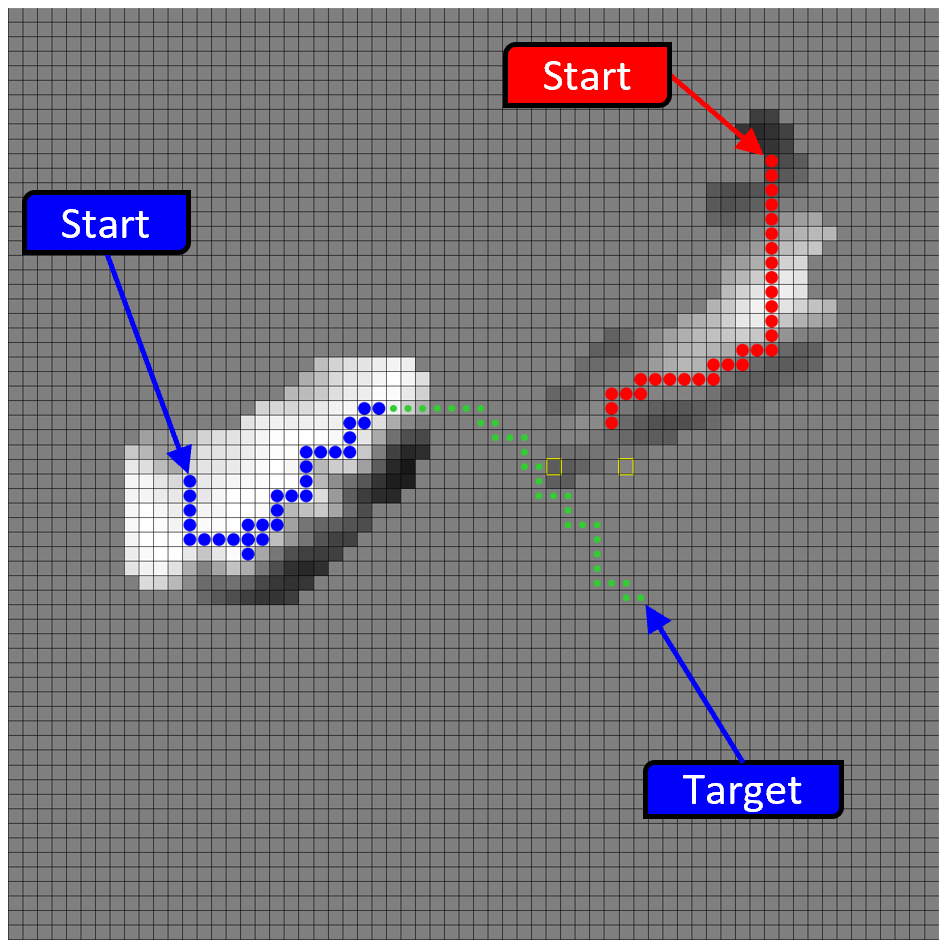}
&
\includegraphics[width=0.23\linewidth]{./figs/sim2/betaSGP10_t70.png}
&
\includegraphics[width=0.23\linewidth]{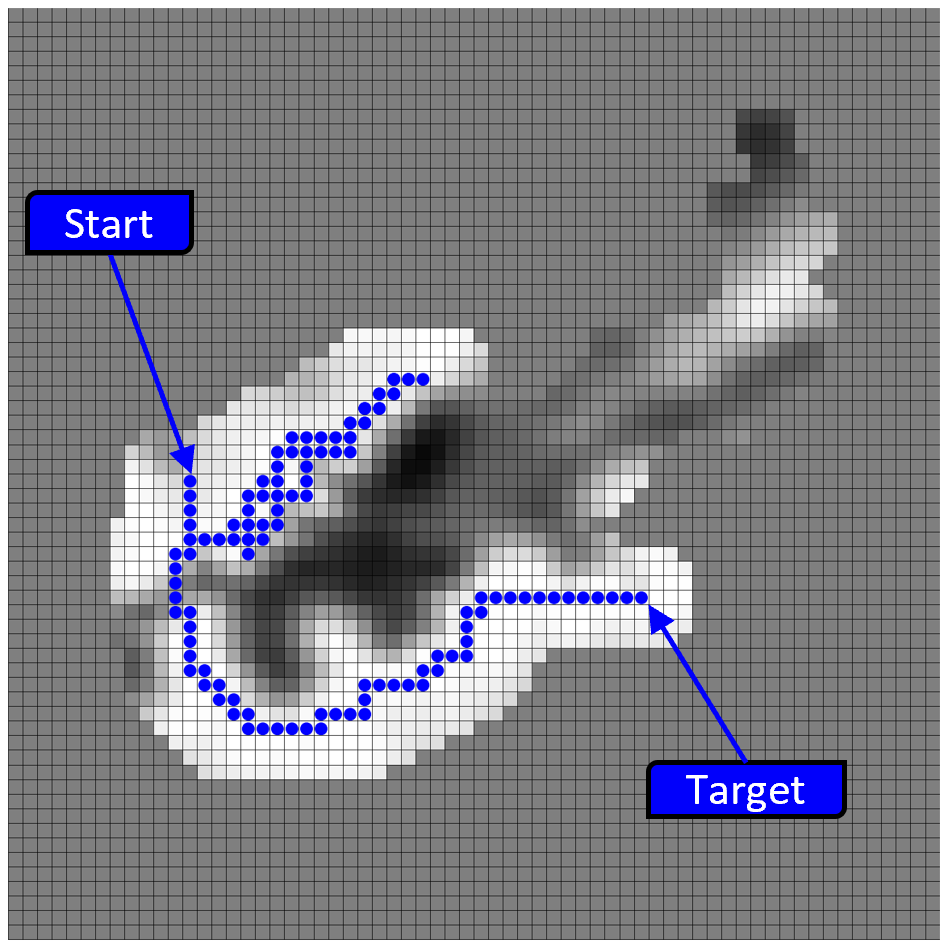}
\\

\raisebox{0.12\linewidth}{FI-GP}
&
\includegraphics[width=0.23\linewidth]{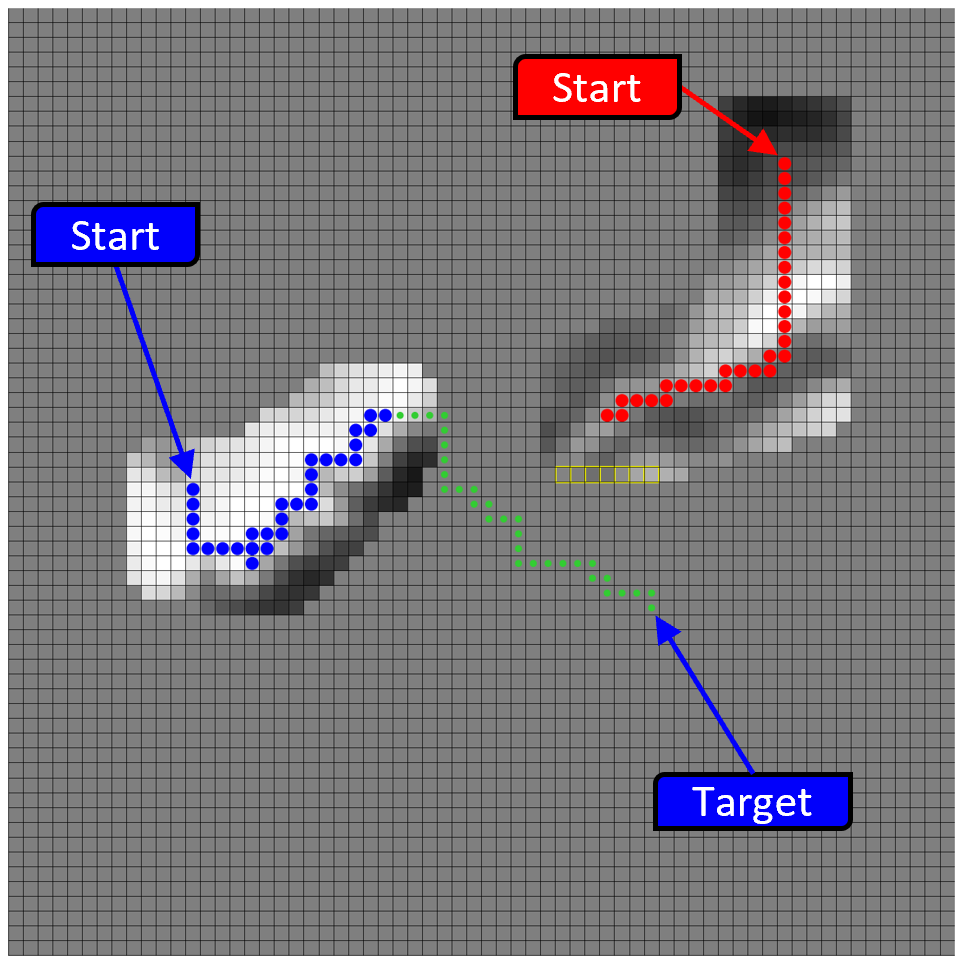}
&
\includegraphics[width=0.23\linewidth]{./figs/sim2/FIGP_t70.png}
&
\includegraphics[width=0.23\linewidth]{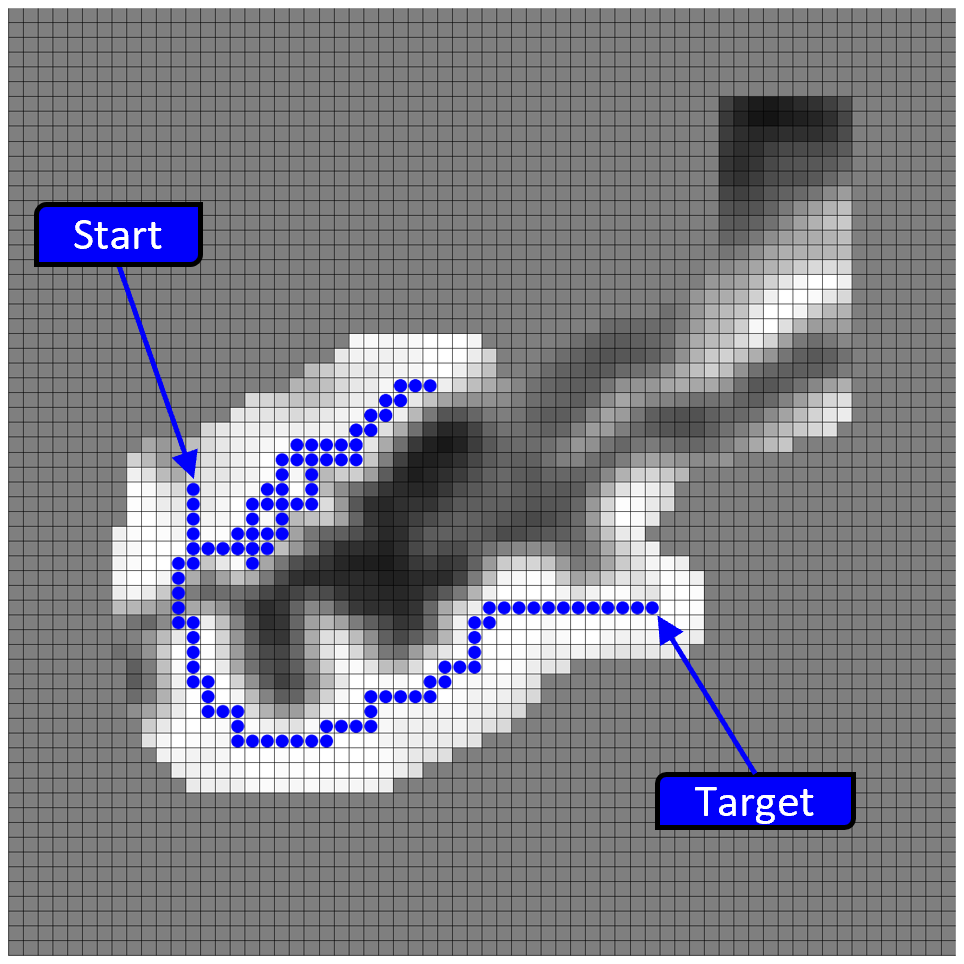}
\\

\end{tabular}

\caption{Reconstructed maps of the Uninformed (U) framework with GP, the Fully Informed framework without GP (FI), the proposed $\beta$-SGP framework with $\beta = 1$ and $\beta = 10$, and the Fully Informed framework with GP (FI-GP). 
Blue and red cells indicate regions traversed by the Actor and the Sensor, respectively, while green cells indicate Actor's path and $\mathbf{x}_0^{S} = (52, 55)$. 
The final times for each framework respectively are: $t_\textrm{U} = 172$, $t_\textrm{FI} = 124$, $t_{\beta-\textrm{SGP}, \beta = 1} = 128$, $t_{\beta-\textrm{SGP}, \beta = 10} = 116$, $t_\textrm{FI-GP} = 116$.}
\label{fig:earth_sim_extended}

\end{figure}

\end{document}